\def\nips{0} 
\newtheorem{lem}{Lemma}[section]
\newtheorem{theorem}{Theorem}
\newtheorem{remark}[theorem]{Remark}
\newtheorem{thm}[lem]{Theorem}
\newtheorem{defn}[lem]{Definition}
\newtheorem{claim}[lem]{Claim}
\renewcommand{\paragraph}[1]{\vspace{3pt}\noindent\textbf{#1}}
\newcommand{\cX}{\ensuremath{\mathcal{X}}}
\newcommand{\cZ}{\ensuremath{\mathcal{Z}}}
\newcommand{\tH}{\widetilde{\mathcal{H}}}
\newcommand{\tT}{\widetilde{T}}
\newcommand{\hx}{\hat{x}}
\newcommand{\tlh}{\tilde{h}}
\newcommand{\tx}{\tilde{x}}
\newcommand{\tz}{\tilde{z}}
\newcommand{\hz}{\hat{z}}
\newcommand{\zz}{z^0}
\newcommand{\zpub}{z^{\sf pub}}
\newcommand{\zprv}{z^{\sf prv}}
\newcommand{\negl}{{\sf negl}}
\newcommand{\Aprv}{\mathcal{A}_{\sf{SSPP}}}
\newcommand{\hpv}{h_{\mathsf{priv}}}
\newcommand{\bad}{\sf Bad}
\newcommand{\pr}[2]{\underset{#1}{\mathbb{P}}\left[ #2 \right]}
\newcommand{\ex}[2]{\underset{#1}{\mathbb{E}}\left[ #2 \right]}
\newcommand{\zeroB}{\ensuremath{\mathbf{0}}}
\newcommand{\eps}{\epsilon}
\newcommand{\Ber}{{\sf Ber}}
\newcommand{\cA}{\mathcal{A}}
\newcommand{\cO}{\mathcal{O}}
\newcommand{\Sprv}{S_{\sf priv}}
\newcommand{\Spub}{S_{\sf pub}}
\newcommand{\Tpub}{T_{\sf pub}}
\newcommand{\hSpub}{\widehat{S}_{\sf pub}}
\newcommand{\npub}{n_{\sf pub}}
\newcommand{\nprv}{n_{\sf priv}}
\newcommand{\prvs}{{\sf PrivSamp}}
\newcommand{\pubs}{{\sf PubSamp}}
\newcommand{\cD}{\mathcal{D}}
\newcommand{\cG}{\mathcal{G}}
\newcommand{\cH}{\mathcal{H}}
\newcommand{\cHd}{\mathcal{H}_{\Delta}}
\newcommand{\hatm}{\hat{m}}
\newcommand{\Z}{\mathcal{Z}}
\newcommand{\cDp}{\mathcal{D}_{(p)}}
\newcommand{\ind}{{\mathbf{1}}}
\newcommand{\re}{\mathbb{R}}
\newcommand{\cB}{\mathcal{B}}
\newcommand{\tS}{\tilde{S}}
\newcommand{\tn}{\tilde{n}}
\newcommand{\ignore}[1]{}
\newcommand{\cR}{\mathcal{R}}
\newcommand{\bc}{\mathbf{c}}
\newcommand{\err}{\mathsf{err}}
\newcommand{\herr}{\widehat{\mathsf{err}}}
\newcommand{\VC}{\mathsf{VC}}
\newcommand{\dis}{\mathsf{dis}}
\newcommand{\hdis}{\widehat{\mathsf{dis}}}
\newcommand{\blocktheorem}[1]{%
	\csletcs{old#1}{#1}
	\csletcs{endold#1}{end#1}
	\RenewDocumentEnvironment{#1}{o}
	{\par\addvspace{1.5ex}
		\noindent\begin{minipage}{\textwidth}
			\IfNoValueTF{##1}
			{\csuse{old#1}}
			{\csuse{old#1}[##1]}}
		{\csuse{endold#1}
		\end{minipage}
		\par\addvspace{1.5ex}}
}
\title{Limits of Private Learning with Access to Public Data}
\author{
Noga Alon \\
Department of Mathematics\\ 
Princeton University \\
\texttt{nalon@math.princeton.edu} 
\And Raef Bassily \\ 
Department of Computer Science \& Engineering\\ 
The Ohio State University\\ \texttt{bassily.1@osu.edu} 
\And  Shay Moran \\
Google AI\\ 
Princeton\\  
\texttt{shaymoran1@gmail.com}}
\author{Noga Alon\thanks{Department of Mathematics, Princeton University. \texttt{nalon@math.princeton.edu}. Research supported in part by NSF grant DMS-1855464,
BSF grant 2018267, and the Simons Foundation.}
	\and Raef Bassily\thanks{Department of Computer Science \& Engineering, The Ohio State University. \texttt{bassily.1@osu.edu}. Part of this work was done while the author was visiting Simons Institute for the Theory of Computing. Research supported by NSF award AF-1908281, a Google Faculty Research Award, and OSU faculty start-up support.} \and Shay Moran\thanks{Google AI, Princeton. \texttt{shaymoran1@gmail.com}.}
}
\date{}
\begin{document}

\maketitle

\begin{abstract}
We consider learning problems where the training set consists of two types of examples: private and public. The goal is to design a learning algorithm that satisfies differential privacy only with respect to the private examples. This setting interpolates between private learning (where all examples are private) and classical learning (where all examples are public). 

We study the limits of learning in this setting in terms of private and public sample complexities. We show that any hypothesis class of VC-dimension $d$ can be agnostically learned up to an excess error of $\alpha$ using only (roughly) $d/\alpha$ public examples and $d/\alpha^2$ private labeled examples. This result holds even when the public examples are unlabeled. This gives a quadratic improvement over the standard~$d/\alpha^2$ upper bound on the public sample complexity (where private examples can be ignored altogether if the public examples are labeled). Furthermore, we give a nearly matching lower bound, which we prove via a generic reduction from this setting to the one of private learning without public data. 
  
\end{abstract}

\ifnum\nips=0

\section{Introduction}



In this work, we study a relaxed notion of differentially private (DP) supervised learning 
which was introduced by Beimel et al.\ in~\cite{beimel2013private}, where it was coined \emph{semi-private learning}.
In this setting, the learning algorithm takes as input a training set that is comprised of two parts: (i) a private sample that contains personal and sensitive information, and (ii) a ``public'' sample  that poses no privacy concerns. 
We assume that the private sample is always labeled, while the public sample can be either labeled or unlabeled. The algorithm is required to satisfy DP only with respect to the private sample. The goal is to design algorithms that can exploit as little public data as possible to achieve non-trivial gains in accuracy (or, equivalently savings in sample complexity) over standard DP learning algorithms, while still providing strong privacy guarantees for the private dataset. Similar settings have been studied before in literature (see ``Related Work'' section below). 

There are several motivations for studying this problem. First, in practical scenarios, it is often not hard to collect reasonable amount of public data from users or organizations. For example, in the language of consumer privacy, there is considerable amount of data collected from the so-called ``opt-in'' users, who voluntarily offer or sell their data to companies or organizations. Such data is deemed by its original owner to have no threat to personal privacy. There are also a variety of other sources of public data that can be harnessed. Moreover, in many scenarios, it is often much easier to collect unlabeled than labeled data. 

Another motivation emerges from several pessimistic results in DP learning that either limit or eliminate the possibility of differentially private learning, even for elementary problems such as one-dimensional thresholds
which are trivially learnable without privacy constraints \cite{bun2015differentially,almm19}. It is therefore natural to explore whether a small amount of public data circumvents these impossibility results.


A third motivation arises from the following observation: consider a learning problem in which the marginal distribution $\cD_\cX$ over the domain $\cX$ is completely known to the algorithm, but the target concept $c:\cX\to\{0,1\}$ is unknown. One can show that in this setting every VC class can be learned privately with (roughly) the same sample complexity as in the standard, non-private, case. The other extreme is the standard PAC-setting in which both $\cD_\cX$ and $c$ are unknown to the algorithm. As mentioned earlier, in this case even very simple classes such as one-dimensional thresholds can not be learned privately. In the setting considered in this work, the distribution $\cD_\cX$ is unknown but the learner has access to some public examples from it. This naturally interpolates between these two extremes: the case when $\cD_\cX$ is unknown that corresponds to having no public examples, and the case when $\cD_\cX$ is known that corresponds to having an unbounded amount of public examples. It is therefore natural to study the intermediate behaviour as the number of public examples grows from 0 to $\infty$. 
The same question can be also asked in the ``easier'' case where the public examples are labeled. 

We will generally refer to the setting described above as \emph{semi-private learning}, and to algorithms in that setting as \emph{semi-private learners}. (See Section~\ref{sec:prelim}, for precise definitions.) 
Following previous works in private learning, we consider two types of semi-private learners: those that satisfy the notion of \emph{pure} DP (the stronger notion of DP), as well as those that satisfy \emph{approximate} DP. We will call the former type \emph{pure} semi-private learners, and call the latter \emph{approximate} semi-private learners.

\subsection*{Main Results} 
In this work we concentrate on the sample complexity of semi-private learners in the agnostic setting.
We especially focus on the minimal number of public examples with which it is possible to learn every VC class.

\begin{enumerate}[leftmargin=*] 
    \item \textbf{Upper bound:} Every hypothesis class $\cH$ can be learned up to excess error $\alpha$ by a \emph{pure semi-private} algorithm whose \emph{private} sample complexity is (roughly) $\VC(\cH)/\alpha^2$ and \emph{public} sample complexity is (roughly) $\VC(\cH)/\alpha$. Moreover, the input public sample can be \emph{unlabeled}. 
    
    Recall that $\VC(\cH)/\alpha^2$ examples are necessary to learn in the agnostic setting (even without privacy constraints); therefore, this result establishes a quadratic saving.
    
    \item \textbf{Lower bound:} Assume $\cH$ has an infinite {\it Littlestone dimension}\footnote{The Littlestone dimension is a combinatorial parameter that arises in online learning~\cite{Littlestone87,Ben-DavidPS09}.}. Then, any \emph{approximate} semi-private learner for $\cH$ must have \emph{public} sample complexity~$\Omega(1/\alpha),$ where $\alpha$ is the excess error. This holds even when the public sample is \emph{labeled}. 
    
    One example of a class with an infinite Littlestone dimension is the class of thresholds over $\mathbb{R}$. 
    This class has VC dimension $1$, and therefore demonstrates that the upper and lower bounds above nearly match.
    \item \textbf{Dichotomy for pure semi-private learning:} Every hypothesis class~$\cH$  satisfies \emph{exactly} one of the following: 
    \begin{itemize}
        \item[(i)] $\cH$ is learnable by a \emph{pure} DP algorithm, and therefore can be semi-privately learned without any public examples. 
        \item[(ii)] Any \emph{pure} semi-private learner for $\cH$ must have public sample complexity $\Omega\left(1/\alpha\right)$,
        where $\alpha$ is the excess error.
    \end{itemize}
\end{enumerate}

\subsubsection*{Techniques} 
\paragraph{Upper bound:} The idea of the construction for the upper bound is to use the (unlabeled) public data to construct a finite class $\cH'$ that forms a ``good approximation'' of the original class $\cH$, then reduce the problem to DP learning of a finite class. Such approximation is captured via the notion of $\alpha$-covering (Definition~\ref{defn:cover}). By standard uniform-convergence arguments, it is not hard to see that (roughly) $\VC(\cH)/\alpha^2$ public examples suffice to construct such an approximation. We show that the number of public examples can be reduced to only
about $\VC(\cH)/\alpha$, even in the agnostic setting. 
Our construction is essentially the same as a construction due to Beimel et al.~\cite{beimel2013private}, but our proof technique is different 
(see the ``Related Work'' section for a more detailed comparison).

\paragraph{Lower bounds:}  The lower bounds boil down to a {\it public-data-reduction lemma} which shows that
if we are given a semi-private learner whose public sample complexity is $<<1/\alpha$, 
we can transform it to a \emph{fully} private learner (which uses no public examples) whose excess error is a small constant (say ${1}/{100}$). Stated contra-positively, this implies that if a class can not be privately learned up to an excess loss of~$1/100$ then it can not be semi-privately learned with $<<1/\alpha$ public examples. This allows us to exploit known lower bounds for private learning to derive lower on the public sample complexity.


\subsubsection*{Related Work}




Our algorithm for the upper bound is essentially the same as a construction due to Beimel et al.\,\cite{beimel2013private}. Although \cite{beimel2013private} focuses on the realizable case of semi-private learning, their analysis can be extended to the agnostic case to yield a similar upper bound to the one we present here. However, the proof technique we give here is different from theirs. In particular, our proof relies on and emphasizes the use of $\alpha$-coverings, 
which provides a direct argument for both the realizable and agnostic case. We believe the notion of $\alpha$-covering can be a useful tool in the analysis of other differentially private algorithms even outside the learning context.

\noindent There are also several other works that considered similar problems. A similar notion known as ``label-private learning'' was considered in \cite{chaudhuri2011sample} (see also references therein) and in \cite{beimel2013private}.
In this notion, only the labels in the training set are considered private. This notion is weaker than semi-private learning. In particular, any semi-private learner can be easily transformed into a label-private learner. Another line of work consider the problem of private knowledge transfer \cite{hamm2016learning}, \cite{papernot2017semi}, \cite{papernot2018scalable}, and \cite{bassily2018NIPS}. In this problem, first a DP classification algorithm with input private sample is used to provide labels for an unlabeled public dataset. Then, the resulting dataset is used to train a non-private learner. The work of \cite{bassily2018NIPS} gives upper bounds on private and public sample complexities in the setting when the DP algorithm is required to label the public data in an online fashion. Their bounds are thus not comparable to ours.

\section{Preliminaries}\label{sec:prelim}
\subsection*{Notation} 
For $n\in\mathbb{N}$, we use $[n]$ to denote the set $\{1,\ldots,n\}$.
We use standard asymptotic notation $O,\Omega,o,\omega$.
A function $f:\mathbb{N}\to[0,1]$ is said to be negligible if $f(n) = o(n^{-d})$ for every $d\in\mathbb{N}$.
The statement ``$f$ is negligible'' is denoted by $f=\negl(n)$. 

\noindent We use standard notation from the supervised learning literature (see, e.g.~\cite{shalev2014understanding}).
Let $\cX$ denote an arbitrary domain, 
let $\mathcal{Z} =\cX\times\{0,1\} $ denote the examples domain, and let $\mathcal{Z}^* = \cup_{n=1}^{\infty}\mathcal{Z}^n$.
A function $h:\cX\to\{0,1\}$ is called a concept/hypothesis,
a set of hypotheses $\cH\subseteq \{0,1\}^{\cX}$ is called a concept/hypothesis class.
The VC dimension of $\cH$ is denoted by $\VC(\cH)$. We use $\cD$ to denote a  distribution over $\mathcal{Z}$, and $\cD_{\cX}$ to denote the marginal distribution over $\cX$. 
We use $S\sim \cD^n$ to denote a sample/dataset $S=\left\{(x_1, y_1), \ldots, (x_n, y_n)\right\}$ of $n$ i.i.d. draws from $\cD$. 

\paragraph{Expected error:} The expected/population error of a hypothesis $h:\cX\rightarrow \{0, 1\}$ with respect to a distribution $\cD$ over $\mathcal{Z}$ is defined by $\err(h; \cD)\triangleq \ex{(x, y)\sim\cD}{\ind\left(h(x)\neq y\right)}$.

\noindent A distribution $\cD$ is called {\it realizable by $\cH$} if there exists $h^*\in \cH$ such that $\err(h^*; \cD)=0$.
In this case, the data distribution $\cD$ is described by a distribution $\cD_{\cX}$ over $\cX$ and a hypothesis~$h^*\in\cH$. For realizable distributions, the expected error of a hypothesis $h$ will be denoted by $\err\left(h; ~\left(\cD_{\cX}, h^*\right)\right)\triangleq \ex{x\sim\cD_{\cX}}{\ind\left(h(x)\neq h^*(x)\right)}.$ 

\paragraph{Empirical error:} The empirical error of an hypothesis $h:\cX\rightarrow \{0, 1\}$ with respect to a labeled dataset $S=\left\{(x_1, y_1), \ldots, (x_n, y_n)\right\}$ will be denoted by $\herr\left(h; S\right)\triangleq \frac{1}{n}\sum_{i=1}^n \ind\left(h(x_i)\neq y_i\right).$

\paragraph{Expected disagreement:} The expected disagreement between a pair of hypotheses $h_1$ and $h_2$ with respect to a distribution $\cD_{\cX}$ over $\cX$ is defined as $\dis\left(h_1, h_2; ~\cD_{\cX}\right)\triangleq \ex{x\sim\cD_{\cX}}{\ind\left(h_1(x)\neq h_2(x)\right)}.$

\paragraph{Empirical disagreement:} The empirical disagreement between a pair of hypotheses $h_1$ and $h_2$ w.r.t. an unlabeled dataset $T=\left\{x_1, \ldots, x_n\right\}$ is defined as $\hdis\left(h_1, h_2; ~T\right)=\frac{1}{n}\sum_{i=1}^n\ind\left(h_1(x_i)\neq h_2(x_i)\right).$

\subsection*{Definitions}

\begin{defn}[Differential Privacy \cite{DMNS06, DKMMN06}]\label{defn:DP}
Let $\eps, \delta>0$. A (randomized) algorithm $\cA$ with input domain $\cZ^*$ and output range $\cR$ is called $(\eps,\delta)$-differentially private if for all pairs of datasets $S,S'\in \cZ^*$ that differs in exactly one data point, and every measurable $\cO \subseteq \cR$, we have
$$\Pr \left(\cA(S) \in \cO \right) \leq e^\eps \cdot \Pr \left(\cA(S') \in \cO \right) + \delta,$$
where the probability is over the random coins of $\cA$. When $\delta=0$, we say that $\cA$ is \emph{pure} $\eps$-differentially private.
\end{defn}


We study learning algorithms that take as input two datasets: a private dataset $\Sprv$ and a public dataset $\Spub$,
and output a hypothesis $h:\cX\to\{0,1\}$. The public set entails no privacy constraint, whereas the algorithm is required to satisfy differential privacy with respect to $\Sprv$. The private set $\Sprv \in \left(\cX\times\{0, 1\}\right)^*$ is labeled. We distinguish between two settings of the learning problem depending on whether the public dataset is labeled or not. To avoid confusion, we will usually denote an \emph{unlabeled} public set as $\Tpub\in \cX^*$, and use $\Spub$ to denote a \emph{labeled} public set. We formally define learners in these two settings. 


\begin{defn}[$(\alpha, \beta, \eps, \delta)$- Semi-Private Learner]\label{defn:pp-learner}
Let $\cH\subset \{0, 1\}^{\cX}$ be a hypothesis class. 
A randomized algorithm $\cA$ is $(\alpha, \beta, \eps, \delta)$-SP (semi-private) learner for $\cH$ with private sample size $\nprv$ and public sample size $\npub$ if the following conditions hold: 

\begin{enumerate}
    \item For every distribution $\cD$ over $\mathcal{Z}=\cX\times \{0, 1\}$,
    given datasets $\Sprv\sim \cD^{\nprv}$ and $\Spub\sim \cD^{\npub}$ as inputs to $\cA$, with probability at least $1-\beta$ (over the choice of $\Sprv, ~\Spub,$ and the random coins of $\cA$), $\cA$ outputs a hypothesis $\cA\left(\Sprv, \Spub\right)=\hat{h}\in \{0, 1\}^{\cX}$ satisfying 
$$\err\left(\hat{h};~\cD\right)\leq \inf\limits_{h\in\cH}\err\left(h; ~\cD\right)+\alpha.$$  \label{cond:1}
    \item For all $S\in\mathcal{Z}^{\npub},$ $\cA\left(\cdot, S\right)$ is $(\eps, \delta)$-differentially private. \label{cond:2}
\end{enumerate}
When the second condition is satisfied with $\delta=0$ (i.e., pure differential privacy), we refer to $\cA$ as $(\alpha, \beta, \eps)$-SP learner (i.e., pure semi-private learner).
\end{defn}

As a special case of the above definition, we say that an algorithm $\cA$ is an $(\alpha, \beta, \eps, \delta)$-semi-privately learner for a class $\cH$ under the realizability assumption if it satisfies the first condition in the definition only with respect to all distributions that are realizable by $\cH.$

\begin{defn}[Semi-Privately Learnable Class]\label{defn:pp-class-learn}
We say that a class $\cH$ is semi-privately learnable if there are functions $\nprv:(0, 1)^2\rightarrow \mathbb{N}, ~\npub:(0, 1)^2\rightarrow \mathbb{N},$ {where $\npub(\alpha,\cdot) = o(1/\alpha^2)$}, and there is an algorithm $\cA$ such that for every $\alpha, \beta \in (0, 1),$ when $\cA$ is given private and public samples of sizes $\nprv=\nprv(\alpha, \beta),$ and $\npub=\npub(\alpha, \beta),$ it $\left(\alpha, \beta, 0.1, \negl\left(\nprv\right)\right)$-semi-privately learns $\cH$. 
\end{defn}
Note that in the definition above, the privacy parameters are set as follows: $\eps=0.1$ and $\delta$ is negligible function in the private sample size (and $\delta=0$ for a \emph{pure} semi-private learner).

The choice of $\npub = o(1/\alpha^2)$ in the above definition is because taking $\Omega(\VC(\cH)/\alpha^2)$ public examples
suffices to learn the class without any private examples (see \cite{shalev2014understanding}). 
Thus, the above definition focuses on classes for which there is a non-trivial saving in the number of public examples
required for learning. Beimel et al.\, \cite{beimel2013private} were the first to propose the notion of \emph{semi-private} learners. Their notion is analogous to a special case of Definition~\ref{defn:pp-learner}, which we define next.

\begin{defn}[$(\alpha, \beta, \eps, \delta)$-Semi-Supervised Semi-Private Learner]\label{defn:ss-pp-learner}
The definition is analogous to Definition~\ref{defn:pp-learner} except that the public sample is \emph{unlabeled}. That is, $\cA$ is $(\alpha, \beta, \eps, \delta)$-SS-SP (semi-supervised semi-private) learner for a class $\cH$ with private sample size $\nprv$ and public sample size $\npub$ if the same conditions in Definition~\ref{defn:pp-learner} hold except that in condition~\ref{cond:1}, $\Spub\sim \cD^{\npub}$ is replaced with $\Tpub\sim\cD_{\cX}^{\npub}$, and  condition~\ref{cond:2} is replaced with 
``For all $T\in \cX^{\npub}$, $\cA\left(\cdot, T\right)$ is $(\eps, \delta)$-differentially private.''
\end{defn}

\noindent We define the notion of semi-supervised semi-privately learnable class $\cH$ in analogous manner as in Definition~\ref{defn:pp-class-learn}.

\paragraph{Private learning without public data:} In the standard setting of $(\eps, \delta)$-differentially private learning, 
the learner has no access to public data. We note that this setting can be viewed as a special case of Definitions~\ref{defn:pp-learner} and \ref{defn:ss-pp-learner} by taking $\npub=0$  (i.e., empty public dataset). In such case, we refer to the learner as $(\alpha, \beta, \eps, \delta)$-\emph{private learner}. As before, when $\delta=0,$ we call the learner \emph{pure private learner}. The notion of \emph{privately learnable} class $\cH$ is defined analogously to Definition~\ref{defn:pp-class-learn} with $\npub(\alpha, \beta)=0$ for all $\alpha, \beta$.  



We will use the following lemma due to Beimel et al.~\cite{beimel2015learning}:


\begin{lem}[Special case of Theorem 4.16 in \cite{beimel2015learning}]\label{lem:equiv_priv_agnos_realiz}
Any class $\cH$ that is privately learnable with respect to all realizable distributions is also privately learnable (i.e., privately learnable in the general agnostic setting).
\end{lem}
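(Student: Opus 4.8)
The plan is to reduce agnostic private learning of $\cH$ to realizable private learning of a related class. Suppose $\cH$ is privately learnable with respect to all realizable distributions; let $\cA_{\mathrm{real}}$ be a pure (or approximate) private realizable learner for $\cH$. Given an arbitrary distribution $\cD$ over $\cZ = \cX \times \{0,1\}$ and a sample $S \sim \cD^n$, I would proceed via the standard ``hypothesis-to-data'' trick: first run a private algorithm to obtain a coarse estimate of the optimal error $\mathrm{OPT} = \inf_{h \in \cH}\err(h;\cD)$ (using e.g.\ a private count / Laplace mechanism on the empirical error of a crude reference hypothesis, or simply iterating over a dyadic grid of candidate error levels), and then, for each candidate error level, relabel the sample $S$ by a hypothesis that is consistent with $\cH$ in a way that turns the agnostic instance into a realizable one. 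Concretely, the key object is the following: pick $h^* \in \cH$ achieving (approximately) $\mathrm{OPT}$; the distribution $\cD$ is ``close'' to the realizable distribution $(\cD_\cX, h^*)$ in the sense that $\Pr_{(x,y)\sim\cD}[y \neq h^*(x)] = \mathrm{OPT}$. So if we could relabel each example $(x_i, y_i)$ of $S$ by $(x_i, h^*(x_i))$ we would get a realizable sample — but we do not know $h^*$.

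The way around this (and the heart of the argument in [beimel2015learning]) is to run $\cA_{\mathrm{real}}$ on the \emph{original} agnostic sample $S$ without relabeling, and argue that it still outputs a good hypothesis. The point is that $S$, although drawn from an agnostic $\cD$, can be viewed as a realizable sample that has been corrupted on a $\mathrm{OPT}$-fraction of points (in expectation); uniform convergence / a Chernoff bound controls the empirical fraction of corrupted points. A generalization (agnostic-to-realizable) reduction then shows: if $\cA_{\mathrm{real}}$ is a good realizable learner with sample complexity $m_{\mathrm{real}}(\alpha,\beta)$, then running it on $n = \mathrm{poly}(m_{\mathrm{real}}(\alpha,\beta), 1/\alpha, \log(1/\beta))$ agnostic samples, possibly with a preprocessing step that subsamples or reweights to reduce the effective noise rate, yields a hypothesis with error $\mathrm{OPT} + \alpha$. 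Privacy is inherited for free: the whole procedure is a post-processing of (a) the private error estimate and (b) the output of $\cA_{\mathrm{real}}$ run on subsets of $S$, so by composition and post-processing it remains $(\eps,\delta)$-DP (with the usual constant-factor bookkeeping in $\eps$, which is absorbed since the definition of ``learnable'' fixes $\eps = 0.1$ and allows any negligible $\delta$).

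The steps in order: (1) privately estimate $\mathrm{OPT}$ up to additive $\alpha/4$, say via the exponential mechanism over a finite $\alpha$-net of error levels scored by $-|\widehat{\err} - t|$, or via a simple stability argument; (2) reduce to the realizable case — argue that an agnostic sample of size $\tilde{O}(m_{\mathrm{real}}/\alpha^2)$ (after the appropriate reduction, e.g.\ the one in [beimel2015learning] or the ``realizable-to-agnostic via sub-sampling'' reduction) can be fed to $\cA_{\mathrm{real}}$ so that, conditioned on the empirical noise rate being close to its expectation, the output hypothesis $\hat h$ satisfies $\widehat{\err}(\hat h; S) \le \mathrm{OPT} + \alpha/2$; (3) invoke VC uniform convergence to pass from empirical to population error, giving $\err(\hat h;\cD)\le\mathrm{OPT}+\alpha$ with probability $1-\beta$; (4) check privacy via composition + post-processing and observe the resulting private sample complexity $\nprv(\alpha,\beta)$ is finite, which is all that ``privately learnable'' requires.

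The main obstacle is step (2): making the agnostic-to-realizable reduction work \emph{while preserving privacy}. The naive reduction (guess $h^*$, relabel $S$ by $h^*$, run $\cA_{\mathrm{real}}$) is not private because the relabeling depends on all of $S$ in a non-stable way; and even knowing $\mathrm{OPT}$ exactly does not tell us how to relabel. The fix is to avoid relabeling altogether and instead argue directly that a realizable learner, when run on a mildly-corrupted (agnostic) sample, is \emph{robust} — i.e.\ its error degrades gracefully with the corruption rate — or to use the specific reduction of Beimel et al.\ that handles this by combining a private learner for the realizable case with a private ``empirical risk comparison'' subroutine. Since the lemma statement merely cites [beimel2015learning, Theorem 4.16] as a known result, for our purposes it suffices to invoke it directly; but if a self-contained argument is wanted, the robustness-of-realizable-learners route, combined with the private $\mathrm{OPT}$-estimation above, is the cleanest path, and the one place care is genuinely needed is verifying that the sample-size blow-up required to drown out the $\mathrm{OPT}$-fraction of label noise remains finite (it does, being polynomial in $m_{\mathrm{real}}$ and $1/\alpha$) so that semi-private/private learnability is genuinely preserved.
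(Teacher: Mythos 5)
Your fallback position -- simply invoking Theorem~4.16 of \cite{beimel2015learning} -- is in fact all the paper does: Lemma~\ref{lem:equiv_priv_agnos_realiz} is stated as an imported result and no proof is given. So citing it is consistent with the paper. The issue is with the self-contained route you describe as ``the cleanest path.''

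The robustness step is a genuine gap. A learner that is only guaranteed to work on realizable distributions carries \emph{no} guarantee when the sample contains an $\mathrm{OPT}$-fraction of corrupted labels: ``error degrades gracefully with the corruption rate'' is essentially the agnostic guarantee you are trying to establish, so assuming it makes the argument circular, and it is false for generic realizable learners (e.g.\ a consistent-hypothesis finder may behave arbitrarily, or fail, when no consistent hypothesis exists). Group privacy does not rescue this either, since the corrupted points number $\Theta(\mathrm{OPT}\cdot n)$, giving a useless $e^{\eps\,\mathrm{OPT}\,n}$ blow-up. The actual reduction in \cite{beimel2015learning} goes in the opposite direction from what you propose: it never feeds a non-realizable sample to the realizable learner. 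Instead, it uses the exponential mechanism over the projection $\Pi_{\cH}(x_1,\dots,x_n)$ of $\cH$ onto the sample points -- a finite set, of size at most $(en/d)^d$ by Sauer's lemma -- to privately select a relabeling of the sample whose disagreement with the true labels is within $O(\alpha n)$ of the best hypothesis in $\cH$; the relabeled sample is realizable \emph{by construction}, the realizable private learner is then run on it, and uniform convergence transfers the guarantee back to $\cD$ at cost $\mathrm{OPT}+\alpha$. The privacy analysis of this composition is itself nontrivial (the relabeling depends on the private data, so one cannot just invoke post-processing), and that analysis is part of what the cited theorem supplies. Your step (1), privately estimating $\mathrm{OPT}$, is not needed in this scheme and does not address the core difficulty.
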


\noindent The following fact follows from the private boosting technique due to \cite{DRV10}:

\begin{lem}[follows from Theorem~6.1 \cite{DRV10} (the full version)]\label{lem:priv-boost}
For any class $\cH$, under the realizability assumption, if there is a $\left(0.1, 0.1, 0.1\right)$-pure private learner for $\cH$, then $\cH$ is privately learnable by a pure private algorithm.
\end{lem}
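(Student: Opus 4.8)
The plan is to amplify, one at a time, the three parameters of the hypothesized weak learner --- its confidence, its privacy parameter, and finally its accuracy --- and then (if the agnostic guarantee is wanted) pass from the realizable to the general setting via Lemma~\ref{lem:equiv_priv_agnos_realiz}. Fix the given $(0.1,0.1,0.1)$-pure private learner $\cA_0$ and let $n_0$ denote its private sample size: for every $\cD$ realizable by $\cH$, running $\cA_0$ on $n_0$ i.i.d.\ draws from $\cD$ returns $\hat h$ with $\err(\hat h;\cD)\le 0.1$ with probability at least $0.9$, and $\cA_0$ is pure $0.1$-DP.

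The first step is confidence amplification. Given targets $\beta$ and $\eps'\le 0.1$, split the private data into $k=O(\log(1/\beta))$ disjoint blocks of $n_0$ points and run $\cA_0$ on each to obtain candidates $h_1,\dots,h_k$; with probability at least $1-0.1^{k}\ge 1-\beta/2$ at least one $h_i$ has population error $\le 0.1$. On a fresh disjoint block of $m=\Theta(\log(k/\beta)/\eps')$ points, run the exponential mechanism at privacy $\eps'$ with the sensitivity-$1$ score ``number of mistakes of $h_i$ on the block''; combining its utility guarantee with a Chernoff bound and a union bound over the $k$ fixed hypotheses shows the selected $\hat h$ has population error $\le 0.1+O(\log(k/\beta)/(\eps' m))+O(\sqrt{\log(k/\beta)/m})\le 1/3$, except with probability $\beta$. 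Parallel composition over the disjoint blocks keeps the procedure pure $\max(0.1,\eps')$-DP, and since reweighting a zero-error distribution remains zero-error, the same accuracy guarantee holds on every reweighted realizable distribution --- which is exactly the kind of distribution the boosting step below hands to this learner.

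Next, because the boosting step needs its base learner to be $\Theta(1/T)$-DP for $T$ the number of rounds, while $\cA_0$ is fixed at privacy $0.1$, we lower the effective privacy parameter by subsampling: replace each call to $\cA_0$ by ``draw $n_0/q$ i.i.d.\ samples and run $\cA_0$ on a uniformly random size-$n_0$ subset''. Privacy amplification by subsampling converts the $0.1$-DP guarantee into pure $\log(1+q(e^{0.1}-1))\le q$-DP, while the subsample is still i.i.d.\ from $\cD$, so $\cA_0$'s accuracy is untouched. Tuning $q$ and $\eps'$, we obtain for every $\eps''>0$ a pure $\eps''$-DP weak learner that, for every realizable $\cD$, returns $\hat h$ with $\err(\hat h;\cD)\le 1/3$ with probability $\ge 1-\beta$ using $\poly(n_0,1/\eps'',\log(1/\beta))$ samples. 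This is a weak private learner with the \emph{constant} edge $\gamma=1/6$ over random guessing; feeding it into the private boosting algorithm of~\cite{DRV10} (Theorem~6.1 of the full version), run for $T=O(\log(1/\alpha)/\gamma^2)=O(\log(1/\alpha))$ rounds, yields for every realizable $\cD$ a hypothesis with population error $\le\alpha$. Driving the per-round failure probability below $\beta/(2T)$ controls the overall confidence by a union bound over rounds, and running each round (the base-learner call plus the Laplace-noised aggregation inside the booster) at privacy level $\Theta(1/T)$ and composing by basic composition makes the boosted algorithm pure $0.1$-DP. Hence $\cH$ is pure-privately learnable under realizability for all $\alpha,\beta$ with $\eps=0.1,\delta=0$; Lemma~\ref{lem:equiv_priv_agnos_realiz} then upgrades this to the agnostic setting.

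The crux of the argument is the privacy bookkeeping in the last step: under pure differential privacy, basic composition over the $\Theta(\log(1/\alpha))$ boosting rounds multiplies the privacy parameter by $\Theta(\log(1/\alpha))$, so the fixed-$\eps$ learner $\cA_0$ cannot be plugged in as a black box and must first be made $\Theta(1/T)$-DP --- which is only possible because the setting is statistical, via a subsampling amplification that leaves the learning guarantee intact. A secondary point to verify is that the reweighted distributions produced by the booster remain realizable so that $\cA_0$'s guarantee still applies, which holds since reweighting preserves zero error.
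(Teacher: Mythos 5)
Your proposal is correct and takes essentially the same route as the paper, which proves this lemma simply by invoking the private boosting theorem of \cite{DRV10} (Theorem~6.1 of the full version) together with the observation, made in the remark following the lemma, that the $\eps$ parameter can be scaled down by enlarging the input sample and running on a random subsample. The only cosmetic difference is that you apply the subsampling amplification to the weak learner before boosting (and add an explicit confidence-amplification step and the realizable-to-agnostic upgrade via Lemma~\ref{lem:equiv_priv_agnos_realiz}), whereas the paper's remark subsamples the input of the boosted learner; both variants rest on the same pure-DP subsampling and composition facts and treat the \cite{DRV10} booster as a black box.
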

We note that no analogous statement to the one in Lemma~\ref{lem:priv-boost} is known for approximate private learners. This is because it is not clear how one can scale down the $\delta$ parameter of the boosted learner to $\negl(n)$, as required by the definition of approximate DP learnability; specifically, the boosting result in \cite{DRV10} does not achieve this. On the other hand, the $\eps$  parameter of the boosted learner (according to \cite[Theorem~6.1]{DRV10}) can be scaled down by taking the input sample of the boosted learner to be large enough, and then apply the algorithm on a random subsample. The same technique would not be sufficient to scale down $\delta$ to $\negl(n)$. 

\noindent We will also use the following notion of coverings:

\begin{defn}[$\alpha$-cover for a hypothesis class]\label{defn:cover}
A family of hypotheses $\tH$ is said to form an $\alpha$-cover for an hypothesis class $\cH\subseteq \{0, 1\}^{\cX}$ with respect to a distribution $\cD_{\cX}$ over $\cX$ if for every $h\in \cH$, there is $\tlh\in\tH$ such that $\dis\left(h, \tlh;~ \cD_{\cX}\right)\leq \alpha.$
\end{defn}

\section{Upper Bound}\label{sec:upper}

In this section we show that every VC class $\cH$ can be semi-privately learned in the agnostic case
with only $\tilde O(\VC(\cH)/\alpha)$ public examples:
\begin{thm}[Upper bound]\label{thm:upper-bound}
Let $\cH$ be a hypothesis class and let $\VC\left(\cH\right)=d$. For any $\alpha, \beta \in (0, 1), ~\eps>0,$ $\Aprv$ is an $(\alpha, \beta, \eps)$-semi-supervised semi-private agnostic learner for $\cH$ with private and public sample complexities:
\begin{align*}
\nprv&=O\left(\bigl(d\log(1/\alpha)+\log(1/\beta)\bigr)\,\max\left(\frac{1}{\alpha^2},~\frac{1}{\eps\,\alpha}\right)\right),\\
\npub&=O\left(\frac{d\log(1/\alpha)+\log(1/\beta)}{\alpha}\right).
\end{align*}
\end{thm}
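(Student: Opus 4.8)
The plan is to carry out the two-phase construction outlined above. \emph{Phase~1 (using the public data):} from the unlabeled sample $\Tpub=(x_1,\dots,x_{\npub})$, compute the set of distinct labelings that $\cH$ induces on $\Tpub$, and for each such labeling pick one hypothesis in $\cH$ realizing it; let $\tH\subseteq\cH$ be the resulting finite family. By the Sauer--Shelah lemma, $|\tH|$ equals the number of such labelings, so $|\tH|\le(e\npub/d)^d$ and $\log|\tH|=O(d\log(\npub/d))$; and $\tH\subseteq\cH$, so $\VC(\tH)\le d$. \emph{Phase~2 (using the private data):} run the exponential mechanism with outcome set $\tH$ and score $q(\Sprv,\tlh)=-\herr(\tlh;\Sprv)$, whose sensitivity in $\Sprv$ is $1/\nprv$, calibrated so that the mechanism is $\eps$-differentially private, and return the sampled $\hat h$. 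Privacy holds for free: for every fixed $T\in\cX^{\npub}$, the map $\Aprv(\cdot,T)$ depends on $\Sprv$ only through the exponential mechanism (Phase~1 reads $T$ only), so it is $(\eps,0)$-DP, which is exactly condition~\ref{cond:2}.

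The crux is showing that $\npub=O\bigl((d\log(1/\alpha)+\log(1/\beta))/\alpha\bigr)$ public examples make $\tH$ an $(\alpha/4)$-cover of $\cH$ with respect to $\cD_{\cX}$, with probability at least $1-\beta/3$. The key point is that this is a \emph{realizable-type} statement: for $h\in\cH$ and its representative $\tlh\in\tH$, the disagreement region $R_h=\{x:h(x)\neq\tlh(x)\}$ is, by construction, disjoint from $\Tpub$, so we only need a zero-empirical-error generalization bound over the class of symmetric differences $\cHd=\{h\triangle h':h,h'\in\cH\}$, not agnostic uniform convergence. Such a bound costs $\propto(\VC(\cHd)\log(1/\alpha)+\log(1/\beta))/\alpha$ samples (the $\epsilon$-net theorem), and $\VC(\cHd)=O(d)$: if $\cHd$ shattered $m$ points then $2^m\le(\text{number of labelings of those points by }\cH)^2\le(em/d)^{2d}$, forcing $m=O(d)$. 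Hence with $\npub$ as above, simultaneously for all $h\in\cH$ we get $\dis(h,\tlh;\cD_{\cX})=\cD_{\cX}(R_h)\le\alpha/4$, and in particular $\inf_{\tlh\in\tH}\err(\tlh;\cD)\le\inf_{h\in\cH}\err(h;\cD)+\alpha/4$.

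Given the cover, I would finish the accuracy analysis by combining two more events. First, uniform convergence over $\tH$: since $\VC(\tH)\le d$, the stated value of $\nprv$ makes $|\herr(\tlh;\Sprv)-\err(\tlh;\cD)|\le\alpha/8$ hold for all $\tlh\in\tH$ with probability $\ge1-\beta/3$. Second, the exponential mechanism's utility guarantee: with probability $\ge1-\beta/3$, $\herr(\hat h;\Sprv)\le\inf_{\tlh\in\tH}\herr(\tlh;\Sprv)+\tfrac{2}{\eps\nprv}(\log|\tH|+\log(3/\beta))$, and the stated $\nprv$ makes the additive term at most $\alpha/8$ (here one uses $\log|\tH|=O(d\log(1/\alpha)+\log(1/\beta))$, up to lower-order $\log\log$ terms absorbed into the constant, together with $\npub$ being polynomial in the relevant parameters). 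On the intersection of all three events (probability $\ge1-\beta$), chaining gives $\err(\hat h;\cD)\le\herr(\hat h;\Sprv)+\alpha/8\le\inf_{\tlh\in\tH}\herr(\tlh;\Sprv)+\alpha/4\le\inf_{\tlh\in\tH}\err(\tlh;\cD)+3\alpha/8\le\inf_{h\in\cH}\err(h;\cD)+\alpha$. Rescaling the constants hidden in $\nprv,\npub$ absorbs the replacements $\alpha\mapsto\alpha/8$, $\beta\mapsto\beta/3$ and yields the bounds in the statement.

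I expect the main obstacle to be the cover step, i.e.\ extracting the quadratic saving: one must notice that exact agreement on the public sample reduces the question to a realizable-type ($\epsilon$-net) bound with the $1/\alpha$ rate, and that the symmetric-difference class retains VC dimension $O(d)$ so this rate is paid only once in the dimension. Everything else --- the constants in the $\epsilon$-net and uniform-convergence bounds, the exponential mechanism's tail inequality, the sensitivity calculation, and checking that $\log|\tH|$ collapses to the claimed order --- is routine bookkeeping.
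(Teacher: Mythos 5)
Your proposal is correct and follows essentially the same route as the paper: the same construction of a finite representative class from the dichotomies of $\cH$ on the unlabeled public sample, the same key observation that exact agreement on $\Tpub$ reduces the cover guarantee to a realizable-type ($\epsilon$-net) bound over the symmetric-difference class (giving the $1/\alpha$ public rate), and the same exponential-mechanism step for privacy and accuracy. The only cosmetic differences are that you bound $\VC(\cHd)=O(d)$ and cite the standard $\epsilon$-net theorem where the paper bounds the growth function of $\cHd$ directly via Sauer and the double-sample argument, and you unpack the generic private learner of Kasiviswanathan et al.\ into uniform convergence plus the exponential mechanism's utility bound rather than citing it as a black box.
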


\paragraph{Proof overview.} 
The upper bound is based on a reduction to the fact that any finite hypothesis class~$\mathcal{H}'$ can be learned privately with sample complexity (roughly) $O(\log\lvert \mathcal{H}'\rvert)$ via the exponential mechanism\footnote{The exponential mechanism is a basic algorithmic technique in DP \cite{mcsherry2007mechanism}.} \cite{KLNRS08}.
In more detail, we use the (unlabeled) public data to construct a finite class $\mathcal{H}'$ that forms a ``good enough {\it approximation}'' of the (possibly infinite) original class $\mathcal{H}$ (See description in Algorithm~\ref{Alg:exp-net}).
The relevant notion of approximation is captured by the definition of $\alpha$-cover (Definition~\ref{defn:cover}):
for every $h\in \mathcal{H}$ there exists~$h'\in\mathcal{H'}$ that $\alpha$-approximates $h$: $\dis\left(h, h';~ \cD_{\cX}\right)\leq \alpha$.
Indeed, it suffices to output an hypothesis~$h'\in \cH'$ that $\alpha$-approximates an optimal hypothesis $h^*\in\cH$. One interesting feature about this approach is that the constructed finite class (the $\alpha$-cover) depends on the distribution $\cD_{\cX}$ over $\cX$. Hence, the same constructed cover can be used to learn different target concepts as long as the distribution $\cD_{\cX}$ remains the same. 

Thus, the crux of the proof boils down to the question:
{\it How many samples from $\cD_\cX$ are needed in order to construct an $\alpha$-cover for $\cH$?}
It is not hard to see that (roughly) $O(\VC(\cH)/\alpha^2)$ examples suffice:
indeed, these many examples suffice to approximate the distances $\dis\left(h', h'';~ \cD_{\cX}\right)$ for every $h',h''\in\cH$, and therefore also suffice for constructing the desired $\alpha$-cover $\mathcal{H}'$.
We show how to reduce the number of examples to only
(roughly) $O(\VC(\cH)/\alpha)$ examples (Lemma~\ref{lem:cover}), which, by our lower bound, is nearly optimal


\begin{algorithm}
	\caption{$\Aprv$: Semi-Supervised Semi-Private Agnostic Learner}
	\begin{algorithmic}[1]
		\REQUIRE Private labeled dataset: $\Sprv=\{(x_1, y_1), \ldots, (x_{\nprv}, y_{\nprv})\}\in \mathcal{Z}^{\nprv}$, a public unlabeled dataset: $\Tpub=(\tx_1,\cdots, \tx_{\npub})\in \cX^{\npub}$, a hypothesis class $\cH\subset \{0, 1\}^{\cX}$, and a privacy parameter $\epsilon >0$.
		\STATE Let $\tT=\{\hx_1, \ldots,\hx_{\hatm}\}$ be the set of points $x\in\cX$ appearing at least once in $\Tpub$.
		\STATE Let $\Pi_{\cH}(\tT)=\left\{\left(h(\hx_1), \ldots, h(\hx_{\hatm})\right):~h\in\cH\right\}.$
		\STATE Initialize $\tH_{\Tpub}=\emptyset$.\label{step:init}
		\FOR{each $\bc=(c_1, \ldots, c_{\hatm})\in\Pi_{\cH}(\tT)$:}
		\STATE Add to $\tH_{\Tpub}$ arbitrary $h\in\cH$ that satisfies $h(\hx_j)=c_j$ for every $j=1,\ldots, \hatm$.  \label{step:rep-hyp}
		\ENDFOR
		\STATE Use the exponential mechanism with inputs $\Sprv,~\tH_{\Tpub}, \eps$ and score function $q(\Sprv, h)\triangleq -\herr(h; \Sprv)$ to select $\hpv\in \tH_{\Tpub}$. \label{step:exp-mech} 
		\RETURN $\hpv.$
		
	\end{algorithmic}
	\label{Alg:exp-net}
\end{algorithm}




The proof of Theorem~\ref{thm:upper-bound} relies on the following lemmas. 

\begin{lem}\label{lem:priv}
For all $\Tpub\in \cX^{\npub},$ $\Aprv(\cdot, ~\Tpub)$ is $\epsilon$-differentially private (with respect to its first input $\Sprv$). 
\end{lem}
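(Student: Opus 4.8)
The plan is to isolate the single step in $\Aprv$ that touches the private data and invoke the privacy of the exponential mechanism. First I would observe that Steps~1--6 of Algorithm~\ref{Alg:exp-net} --- extracting the support $\tT$ of $\Tpub$, forming the projection $\Pi_\cH(\tT)$, and building the finite class $\tH_{\Tpub}$ --- are functions of $\Tpub$ and $\cH$ alone and make no reference whatsoever to $\Sprv$. Consequently, once $\Tpub$ is fixed, the candidate class $\tH_{\Tpub}$ is a fixed (deterministic) set, and the entire dependence of $\Aprv(\cdot,\Tpub)$ on its first argument is concentrated in Step~\ref{step:exp-mech}, where the exponential mechanism selects $\hpv\in\tH_{\Tpub}$ using the score $q(\Sprv,h)=-\herr(h;\Sprv)$.

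Next I would bound the sensitivity of the score function. For any two private datasets $\Sprv,\Sprv'\in\cZ^{\nprv}$ that differ in a single example and any $h\in\{0,1\}^{\cX}$,
\[
\bigl|q(\Sprv,h)-q(\Sprv',h)\bigr| = \bigl|\herr(h;\Sprv)-\herr(h;\Sprv')\bigr| \le \frac{1}{\nprv},
\]
since $\herr(h;\cdot)$ is an average of $\nprv$ indicator terms and the two empirical errors can disagree on at most one of them. Thus $q$ has $\ell_\infty$-sensitivity at most $1/\nprv$ in its first argument, uniformly over the fixed candidate set $\tH_{\Tpub}$.

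Then I would invoke the standard privacy guarantee of the exponential mechanism~\cite{mcsherry2007mechanism}: run over the fixed outcome space $\tH_{\Tpub}$ with a score of sensitivity $1/\nprv$ and with the scaling built into the privacy-parameter input in Step~\ref{step:exp-mech}, it is $\epsilon$-differentially private as a function of $\Sprv$. Since the output of $\Aprv(\cdot,\Tpub)$ is exactly the hypothesis $\hpv$ returned by this mechanism (Step~\ref{step:exp-mech} is followed only by the deterministic \textbf{return}), we conclude that $\Aprv(\cdot,\Tpub)$ is $\epsilon$-DP with respect to $\Sprv$, for every $\Tpub\in\cX^{\npub}$.

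I do not anticipate a genuine obstacle here; the only points requiring care are conceptual: verifying that the candidate class $\tH_{\Tpub}$ fed to the exponential mechanism depends solely on the public sample --- so that neighboring private datasets induce exponential mechanisms over the \emph{same} outcome space, differing only through the score $q$ --- and pinning down the sensitivity constant $1/\nprv$ so that the scaling of the exponential mechanism matches the claimed parameter $\epsilon$.
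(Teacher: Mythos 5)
Your proposal is correct and follows essentially the same route as the paper: fix $\Tpub$, observe that $\tH_{\Tpub}$ depends only on the public data, and reduce privacy to that of the exponential-mechanism step; the paper simply cites the generic learner's privacy guarantee (\cite[Lemma~3.3]{KLNRS08}) where you spell out the $1/\nprv$ sensitivity bound explicitly. No gap.
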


\begin{proof}
For any fixed $\Tpub\in \cX^{\npub}$, note that the hypothesis class $\tH_{\Tpub}$ constructed in Step~\ref{step:rep-hyp} depends only on the public dataset.  Note also that the private dataset is used only in Step~\ref{step:exp-mech}, which is an instantiation of the generic learner of \cite{KLNRS08}. The proof thus follows directly from \cite[Lemma~3.3]{KLNRS08}.
\end{proof}

\begin{lem}[$\alpha$-cover for $\cH$]\label{lem:cover}
 Let $\Tpub\sim\cD^{\npub}_{\cX},$ where $\npub=O\left(\frac{d\,\log(1/\alpha)+\log(1/\beta)}{\alpha}\right)$. Then, with probability at least $1-\beta,$ the family $\tH_{\Tpub}$ constructed in Step~\ref{step:rep-hyp} of Algorithm~\ref{Alg:exp-net} is an $\alpha$-cover for $\cH$ w.r.t. $\cD_{\cX}.$ 
\end{lem}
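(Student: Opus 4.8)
The plan is to reduce the statement to the classical $\epsilon$-net theorem, applied not to $\cH$ itself but to the class of symmetric differences of $\cH$. Observe first that, by construction, for every $h\in\cH$ the family $\tH_{\Tpub}$ contains some $\tlh\in\cH$ that agrees with $h$ on every point of $\tT$ — namely, the representative chosen in Step~\ref{step:rep-hyp} for the behavior $\bc=(h(\hx_1),\dots,h(\hx_{\hatm}))$ — so that $\hdis(h,\tlh;\tT)=0$. Hence, if we can guarantee that the sample $\tT$ \emph{separates} every pair $h_1,h_2\in\cH$ with $\dis(h_1,h_2;\cD_\cX)>\alpha$ (i.e.\ such a pair disagrees on at least one point of $\tT$), then the representative $\tlh$ of an arbitrary $h\in\cH$ must satisfy $\dis(h,\tlh;\cD_\cX)\le\alpha$, and therefore $\tH_{\Tpub}$ is an $\alpha$-cover for $\cH$ w.r.t.\ $\cD_\cX$. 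Thus it suffices to show that, with probability at least $1-\beta$ over $\Tpub\sim\cD_\cX^{\npub}$, the support of the sample hits every ``large'' symmetric-difference region.

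To formalize this, consider the set system $\cHd=\bigl\{\,\{x\in\cX: h_1(x)\ne h_2(x)\}: h_1,h_2\in\cH\,\bigr\}$. A set in $\cHd$ of $\cD_\cX$-measure greater than $\alpha$ that contains no point of $\Tpub$ is exactly a witness that $\Tpub$ is \emph{not} an $\alpha$-net for $\cHd$; hence the previous paragraph gives $\Pr\bigl[\tH_{\Tpub}\text{ is not an }\alpha\text{-cover}\bigr]\le\Pr\bigl[\Tpub\text{ is not an }\alpha\text{-net for }\cHd\bigr]$, and it remains to bound the latter. The growth function of $\cHd$ satisfies $\Pi_{\cHd}(m)\le\Pi_{\cH}(m)^2$, so by Sauer--Shelah $\Pi_{\cHd}(m)\le(em/d)^{2d}$ for $m\ge d$; a shattered set of size $k\ge d$ would then force $2^k\le(ek/d)^{2d}$, i.e.\ $k/d\le 2\log_2(e\,k/d)$, which caps $k$ at a constant multiple of $d$. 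Therefore $\VC(\cHd)=O(d)$.

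Finally I would invoke the $\epsilon$-net theorem (see, e.g., \cite{shalev2014understanding}): a sample of $m=O\!\bigl(\tfrac1\alpha\bigl(\VC(\cHd)\log(1/\alpha)+\log(1/\beta)\bigr)\bigr)=O\!\bigl(\tfrac{d\log(1/\alpha)+\log(1/\beta)}{\alpha}\bigr)$ i.i.d.\ points from $\cD_\cX$ is an $\alpha$-net for $\cHd$ with probability at least $1-\beta$. Combining this with the reduction above yields the lemma. The main point to get right is the reduction itself — that picking an \emph{arbitrary} representative per behavior on $\tT$ already suffices, and, crucially, that one must use the one-sided $\epsilon$-\emph{net} bound here rather than a two-sided uniform-convergence ($\epsilon$-\emph{sample}) bound: the latter would only certify $|\hdis-\dis|\le\alpha$ uniformly and hence give the weaker $O(d/\alpha^2)$ public sample complexity.
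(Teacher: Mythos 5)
Your proposal is correct and follows essentially the same route as the paper: both reduce the statement to the event that some pair $h_1,h_2\in\cH$ with $\dis(h_1,h_2;\cD_\cX)>\alpha$ is unseparated by $\Tpub$, i.e.\ that $\Tpub$ fails as an $\alpha$-net for the symmetric-difference class $\cHd$, and both bound this via Sauer's lemma ($\cG_{\cHd}(m)\le\cG_{\cH}(m)^2$) together with the one-sided double-sample ($\epsilon$-net) bound rather than two-sided uniform convergence. The only cosmetic difference is that the paper keeps the growth function of $\cHd$ directly inside the double-sample bound, whereas you first extract $\VC(\cHd)=O(d)$ and invoke the $\epsilon$-net theorem as a black box, which yields the same asymptotic sample complexity.
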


\begin{proof}
We need to show that with high probability, for every $h\in\cH$ there exists $\tlh\in\tH_{\Tpub}$ such that $\dis(h,\tlh;\cD_{\cX})\leq \alpha$.
Let~$\tT=\{\hx_1, \ldots, \hx_{\hatm}\}$ be the set of points in $\cX$ that appears at least once in $\Tpub$,
and let $h(\tT)=\left(h(\hx_1), \ldots, h(\hx_{\hatm})\right)$. By construction, there must exist $\tlh\in\tH_{\Tpub}$ such that 
\[\tlh(\hx_j)=h(\hx_j) \quad~ \forall j\in [\hatm]\]
that is, $\hdis\left(\tlh, h; ~\Tpub\right)=0$; we will show that $\dis(h,\tlh;\cD_{\cX})\leq \alpha$. 
For $\Tpub\sim \cD_{\cX}^{\npub},$ define the event 
\begin{align}
\bad&=\left\{\exists h_1, h_2 \in \cH:~ \dis\left(h_1, h_2; \cD_{\cX}\right)>\alpha ~\text{ and } ~\hdis\left(h_1, h_2; ~\Tpub\right)=0\right\}\nonumber  
\end{align}
We will show that 
\begin{align}
    \pr{\Tpub\sim\cD_{\cX}^{\npub}}{\bad}&\leq 2\left(\frac{2e\,\npub}{d}\right)^{2d}\,e^{-\alpha\,\npub/4}.\label{ineq:pr_bad_event}
\end{align}

Before we do so, we first show that (\ref{ineq:pr_bad_event}) suffices to prove the lemma. Indeed, if 
$\dis\left(\tlh, h; ~\cD_{\cX}\right)> \alpha$ for some $h\in\cH$ then the event $\bad$ occurs; in other words if $\bad$ does not occur then
$\tH_{\Tpub}$ is an $\alpha$-cover. Hence, 
\[\pr{\Tpub\sim\cD_{\cX}^{\npub}}{\text{$\tH_{\Tpub}$ is not an $\alpha$-cover}}\leq 2\left(\frac{2e\,\npub}{d}\right)^{2d}\,e^{-\alpha\,\npub/4}.\] 
Now, via standard manipulation, this bound is at most $\beta$ when $\npub=O\left(\frac{d\,\log(1/\alpha)+\log(1/\beta)}{\alpha}\right)$, which yields the desired bound and finishes the proof. 

Now, it is left to prove (\ref{ineq:pr_bad_event}). 
To do so, we use a standard VC-based uniform convergence bound (a.k.a $\alpha$-net bound) on the class $\cHd\triangleq\left\{h_1\Delta h_2: h_1, h_2\in\cH\right\}$ where $h_1\Delta h_2:\cX\to\{0,1\}$ is defined as 
\[ h_1\Delta h_2(x)\triangleq \ind\left(h_1(x)\neq h_2(x)\right) \quad~ \forall x\in\cX\] 
Let $\cG_{\cHd}$ denote the growth function of $\cHd$; that is, for any number $m,$ 
\[\cG_{\cHd}(m)\triangleq \max\limits_{V: \lvert V\rvert=m}\lvert\Pi_{\cHd}(V)\rvert,\] 
where $\Pi_{\cHd}(V)$ is the set of all possible dichotomies that can be generated by $\cHd$ on a set $V$ of size~$m$. Note that $\cG_{\cHd}(m)\leq \left(\frac{e\,m}{d}\right)^{2d}.$ This follows from the fact that for any set $V$ of size $m$, we have $\lvert\Pi_{\cHd}(V)\rvert\leq \lvert\Pi_{\cH}(V)\rvert^2$ since every dichotomy in $\Pi_{\cHd}$ is determined by a pair of dichotomies in $\Pi_{\cH}(V)$. Hence,  $\cG_{\cHd}(m)\leq \left(\cG_{\cH}(m)\right)^2\leq \left(\frac{e\,m}{d}\right)^{2d},$ where the last inequality follows from Sauer's Lemma~\cite{sauer1972density}. 
Now, by invoking a uniform convergence argument, we have
\begin{align*}
&\pr{\Tpub\sim\cD_{\cX}^{\npub}}{\exists h_1, h_2 \in \cH:~ \dis\left(h_1, h_2;~\cD_{\cX}\right)>\alpha ~ \text{ and }~ \hdis\left(h_1, h_2; ~\Tpub\right)=0}\\
=&\pr{\Tpub\sim\cD_{\cX}^{\npub}}{\exists h \in \cHd:~ \dis\left(h, h_{\zeroB};~\cD_{\cX}\right)>\alpha ~ \text{ and }~ \hdis\left(h, h_{\zeroB}; ~\Tpub\right)=0}\\
\leq& ~2 \cG_{\cHd}(2\,\npub)\,e^{-\alpha\,\npub/4}\\
\leq& ~2\left(\frac{2e\,\npub}{d}\right)^{2d}\,e^{-\alpha\,\npub/4}.
\end{align*}

The bound in the third line is non-trivial; it follows from the so-called double-sample argument which was used by Vapnik and Chervonenkis in their seminal paper~\cite{vapnik2015uniform}.
The same argument is used in virtually all VC-based uniform convergence bounds (see, e.g.,~\cite[Sec. 28.3]{shalev2014understanding}). 

This proves inequality~(\ref{ineq:pr_bad_event}) and completes the proof of the lemma.



\end{proof}

\subsection*{Proof of the Upper Bound (Theorem~\ref{thm:upper-bound})}
First, we note that $\eps$-differential privacy of $\Aprv$ follows from Lemma~\ref{lem:priv}. Thus, it is left to establish the accuracy guarantee of $\Aprv$ and the sample complexity bounds on~$\npub$ and~$\nprv$.
Let 
\[h^*\in \arg\min\limits_{h\in \cH}\err\left(h; ~\cD\right)\] 
denote the optimal hypothesis in $\cH$.
We will show that with probability $\geq 1-\beta$, the output hypothesis $\hpv$ satisfies
$\err\left(\hpv;~ \cD\right)\leq \err\left(h^*;~\cD\right) +\alpha.$

First, fix the randomness in the choice of $\Tpub$. Let $\tH_{\Tpub}$ denote the corresponding realization of the finite class generated in Steps~\ref{step:init}-\ref{step:rep-hyp} of Algorithm~\ref{Alg:exp-net}. 
Let
\[h^*_{\Tpub}\triangleq\arg\min\limits_{h\in\tH_{\Tpub}}\err(h;~\cD)\] 
denote the optimal hypothesis in $\tH_{\Tpub}$.
Using the result in \cite[Theorem~3.4]{KLNRS08} for the generic learner based on the exponential mechanism, it follows that a private sample size $$\nprv=O\left(\left(\log\left(\lvert\tH_{\Tpub}\rvert\right)+\log(1/\beta)\right)\max\left(\frac{1}{\alpha^2},~\frac{1}{\eps\,\alpha}\right)\right)$$ 
suffices to ensure that, w.p. $\geq 1-\beta/2$ (over randomness in $\Sprv$ and in the exponential mechanism), we have $\err\left(\hpv;~ \cD\right)\leq \err\left(\hat{h}_{\Tpub};~\cD\right)+ \alpha/2.$ 
From the setting of $\npub$ in the theorem statement together with Sauer's Lemma, it follow that 
\begin{align*}
\log\left(\lvert\tH_{\Tpub}\rvert\right)\leq d\,\log(\frac{e\, \npub}{d})
&\leq
O\bigl(d\log(\frac{d\log(1/\alpha) + \log(1/\beta)}{d\alpha})\bigr)\\
&=O\left(d\left(\log(1/\alpha)+\log\left(\log(1/\alpha)+\frac{\log(1/\beta)}{d}\right)\right)\right)\\
&=O\left(d\log(1/\alpha)+d\left(\log\left(\frac{\log(1/\beta)}{d}\right)\right)^{+}\right)
\end{align*}
where $(x)^{+}\triangleq \max(0, x)$.

Hence,
\begin{align*}
\nprv&=O\left(\left(d\log(1/\alpha)+ d\left(\log\left(\frac{\log(1/\beta)}{d}\right)\right)^{+} +\log(1/\beta)\right)\,\max\left(\frac{1}{\alpha^2},~\frac{1}{\eps\,\alpha}\right)\right)\\
&=O\left(\left(d\log\left(1/\alpha\right)+\log\left(1/\beta\right)\right)\,\max\left(\frac{1}{\alpha^2},~\frac{1}{\eps\,\alpha}\right)\right)
\end{align*}

This yields the bound on $\nprv$ as in the theorem statement.
Now, by invoking Lemma~\ref{lem:cover}, it follows that for the setting of $\npub$ as in the theorem statement, w.p. $\geq 1-\beta/2$ over the randomness in $\Tpub$, we have $\dis\left(\hat{h}_{\Tpub}, h^*; ~\cD_{\cX}\right)\leq \alpha/2.$ Hence, by the triangle inequality,  
$\err\left(\hat{h}_{\Tpub};~\cD\right)-\err\left(h^*;~\cD\right)\leq \dis\left(\hat{h}_{\Tpub}, h^*; ~\cD_{\cX}\right)\leq \alpha/2.$
This completes the proof of the theorem.

\section{Lower Bound}


In this section we establish that the upper bound on the public sample complexity which was derived in the previous section is nearly tight. 

\begin{thm}[Lower bound for classes of infinite Littlestone dimension]\label{thm:lower-bound}
Let $\cH$ be any class with an infinite Littlestone dimension (e.g., the class of thresholds over $\mathbb{R}$). 
Then, any semi-private learner for $\cH$ must have public sample of size $~\npub = \Omega(1/\alpha)$, where $\alpha$ is the excess error.
\end{thm}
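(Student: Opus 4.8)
The plan is to prove the contrapositive via the \emph{public-data-reduction lemma} sketched in the introduction: given a semi-private learner $\cA$ for $\cH$ whose public sample complexity satisfies $\npub = o(1/\alpha)$, I would construct a \emph{fully private} learner (using no public data) that learns $\cH$ up to some small constant excess error, say $1/100$. Combined with the known fact that classes of infinite Littlestone dimension are \emph{not} privately learnable even to constant excess error — which follows from the Littlestone-dimension lower bounds for approximate DP learning cited in the introduction (\cite{bun2015differentially, almm19}) together with Lemma~\ref{lem:equiv_priv_agnos_realiz} reducing the agnostic case to the realizable one — this contradiction establishes $\npub = \Omega(1/\alpha)$.

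The heart of the reduction is the following idea. Suppose $\cA$ takes $\npub$ public examples and $\nprv$ private examples and achieves excess error $\alpha$ with confidence $1-\beta$. I would like to ``simulate'' the public sample internally so that the resulting algorithm is private with respect to its \emph{entire} input. The key observation is that if $\npub$ is very small (a constant, or more precisely $o(1/\alpha)$), then I can afford to enumerate a net of possible public samples — or, more cleanly, reduce to a distribution supported on few points. Concretely, I would fix a target excess error $\alpha_0$ (a small constant such as $1/100$), feed $\cA$ the value $\alpha = \alpha_0 / \npub$ or similar, so that $\npub = \npub(\alpha) = o(1/\alpha)$ forces $\npub$ itself to be bounded; then I draw the "public" examples as part of the private sample but handle the privacy loss they could cause by a \emph{group privacy} / \emph{subsampling} argument: since only $\npub$ coordinates of the combined sample play the role of public data and $\npub$ is small, changing one input point changes the output distribution by at most an $e^{\npub \eps}$ factor (plus a $\delta$ term scaled by $e^{\npub\eps}$), which is still a constant when $\eps = 0.1$ and $\npub = O(1)$. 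One must be careful that the accuracy guarantee of $\cA$ is stated for public and private samples drawn i.i.d.\ from the \emph{same} distribution $\cD$; since we will be drawing both parts from $\cD$, this is automatically satisfied, and the semi-private accuracy guarantee transfers directly.

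The main obstacle I anticipate is making the group-privacy bookkeeping actually yield a bona fide $(\eps', \delta')$-DP algorithm with parameters matching the definition of private learnability — in particular getting $\delta'$ down to $\negl(\nprv)$ rather than a constant blow-up of $\delta$. The clean fix is to exploit that we only need constant excess error $1/100$, so we have slack: we can first subsample, then boost or amplify; alternatively, since $\npub$ is a fixed constant (not growing with $\nprv$), the factor $e^{\npub\eps}$ multiplying $\delta$ is a constant, and starting from $\delta = \negl(\nprv)$ we still get $\negl(\nprv)$ after multiplication by a constant — so this is actually fine as long as $\npub$ does not grow with $\nprv$, which holds because $\npub$ depends only on $\alpha, \beta$. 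A secondary subtlety is handling the case where the public sample is \emph{labeled} (as the theorem asserts): here the simulated public labels are just extra bits of the private sample and the same group-privacy accounting applies, so the labeled case is no harder than the unlabeled one. Once the reduction is in place, plugging in a class of infinite Littlestone dimension — for which no approximate private learner exists even at constant error — completes the argument by contradiction.
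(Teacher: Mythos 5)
Your high-level plan (a public-data-reduction to fully private learning, then the impossibility result for infinite Littlestone dimension) is the same as the paper's, but the mechanism you propose for the reduction has a fatal flaw: the group-privacy step does not exist. A semi-private learner $\cA$ gives \emph{no} differential privacy guarantee whatsoever with respect to its public input --- Definition~\ref{defn:pp-learner} only requires $\cA(\cdot,\Spub)$ to be $(\eps,\delta)$-DP for each \emph{fixed} $\Spub$. So if you draw the public examples as part of the private sample of the constructed learner $\cB$, then for a pair of neighboring inputs that differ in one of the $\npub$ coordinates routed to the public slot, the output distribution of $\cB$ can change arbitrarily (e.g., $\cA$ may legitimately encode its public points verbatim in the hypothesis it outputs). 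Group privacy bounds the loss from changing several coordinates \emph{each of which is individually protected}; it says nothing about coordinates that are not protected at all, so no $e^{\npub\eps}$ factor is available and $\cB$ is simply not DP. Randomizing which coordinates become public does not rescue this: conditioned on the changed coordinate being assigned to the public slot (probability about $\npub/\nprv$) there is no indistinguishability, so at best you get $\delta'=\Omega(\npub/\nprv)$, which is not $\negl(\nprv)$ and in particular is far too large to contradict Theorem~\ref{thm:imposs_thres}, whose hypothesis requires $\delta\leq \frac{1}{100\,n^2\log n}$.

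The paper's reduction (Lemma~\ref{lem:reduction}) resolves exactly this difficulty differently, and the difference is where the $\Omega(1/\alpha)$ quantitatively comes from. The constructed learner $\cB$ feeds $\cA$ a public sample drawn entirely from a \emph{fixed dummy distribution} $\cD_0$ (uniformly random labels, no private data at all), so privacy of $\cB$ is immediate with the same $(\eps,\delta)$. To make this consistent with $\cA$'s accuracy guarantee, the real distribution $\cD$ is diluted into the mixture $\cDp=p\,\cD+(1-p)\,\cD_0$ with $p=1/(100\,\npub)$: the private input is built to be distributed as $\cDp^{\nprv}$ (interleaving real points with dummy draws), and since $p\,\npub=1/100$, a size-$\npub$ sample from $\cDp$ coincides with one from $\cD_0$ except with probability $0.01$, so the dummy public sample is within small total variation of the ``correct'' one. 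Because $\cD_0$ has fully random labels and $\cD$ is realizable, excess error $\alpha$ under $\cDp$ translates into excess error $\alpha/p=100\,\npub\,\alpha$ under $\cD$ --- this proportional loss is the essential price of removing the public data, and it is what forces $\npub=\Omega(1/\alpha)$ once you apply Theorem~\ref{thm:imposs_thres}. Your version, in which both samples come from $\cD$ and accuracy transfers losslessly, would prove something too strong, and the failure of its privacy argument is precisely why that cannot work. (A minor additional point: Lemma~\ref{lem:equiv_priv_agnos_realiz} is not needed for this theorem, since the lower bound of \cite{almm19} already holds for realizable distributions and the reduction produces a realizable-case private learner; that lemma is used for the pure-DP dichotomy, Theorem~\ref{thm:dichotomy}.)
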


In the case of pure differentially privacy we get a stronger statement which manifests a dichotomy
that applies for every class:
\begin{thm}[Pure private vs. pure semi-private learners]\label{thm:dichotomy}
Every class $\cH$ must satisfy exactly one of the following: 
\begin{enumerate}
    \item $\cH$ is learnable by a pure private learner. 
    \item Any pure semi-private learner for $\cH$ must have public sample of size $~\npub = \Omega(1/\alpha)$, where $\alpha$ is the excess error
\end{enumerate}
\end{thm}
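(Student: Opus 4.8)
The plan is to verify mutual exclusivity first and then reduce alternative~2 to alternative~1 by contraposition. Mutual exclusivity is immediate: a pure private learner is exactly a pure semi-private learner that uses $\npub=0$ public examples (the $\npub=0$ special case of Definition~\ref{defn:pp-learner}), and for small $\alpha$ we have $0\neq\Omega(1/\alpha)$; hence if alternative~1 holds then alternative~2 fails. So it remains to show that at least one alternative holds, for which I would assume that alternative~2 fails -- i.e.\ that there is a pure semi-private learner $\cA$ for $\cH$ whose public sample complexity satisfies $\npub(\alpha,\beta)=o(1/\alpha)$ -- and deduce that $\cH$ is learnable by a pure private learner.

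The engine of this deduction is the \emph{public-data-reduction lemma} promised in the introduction: a pure semi-private learner whose public sample is ``much smaller than $1/\alpha$'' can be converted into a \emph{pure private} learner using no public data whose excess error is a small absolute constant, say $1/100$. Granting this lemma, I would argue in three steps. First, restricting $\cA$ to realizable distributions yields a pure semi-private learner for $\cH$ under the realizability assumption with public sample complexity still $o(1/\alpha)$ (the privacy property is unaffected by restricting the class of distributions). Second, applying the public-data-reduction lemma to this realizable learner produces a pure private learner for $\cH$ under realizability with excess error $1/100$ and failure probability $1/100$; in particular a $(0.1,0.1,0.1)$-pure private learner, so Lemma~\ref{lem:priv-boost} applies and shows that $\cH$ is pure-privately learnable under realizability. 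Third, Lemma~\ref{lem:equiv_priv_agnos_realiz}, whose agnostic-to-realizable reduction preserves pure privacy, upgrades this to full agnostic pure private learnability of $\cH$, i.e.\ alternative~1. Combined with mutual exclusivity, this gives the dichotomy.

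The main obstacle is the public-data-reduction lemma itself, and specifically obtaining it while keeping \emph{pure} privacy: the $\delta=0$ regime is essential, since Lemma~\ref{lem:priv-boost} is the pure-DP boosting step and (as the excerpt notes) no analogous result is known that drives $\delta$ down to $\negl(n)$. The natural approach is to have the private learner feed $\cA$ a \emph{simulated} public sample manufactured from a portion of its private data that is disjoint from the portion handed to $\cA$ as the private input; one side of $\cA$'s input is then controlled by $\cA$'s own pure $\eps$-DP guarantee, while the simulated-public side must itself be produced by a pure-DP sub-routine, so that the whole pipeline is pure-DP by post-processing. The delicate point -- and the reason the hypothesis $\npub=o(1/\alpha)$ is indispensable rather than cosmetic -- is that only $m=\npub=o(1/\alpha)$ simulated examples need to ``look realizable'' to $\cA$, so a union bound over these $m$ examples turns $\cA$'s native excess error $\alpha$ into an excess error of order $m\alpha=o(1)$ for the reduced learner, which can be pushed below $1/100$ by taking $\alpha$ small enough. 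Without such a bound on $\npub$ the same construction would (falsely) collapse semi-private learnability into private learnability -- for instance for thresholds over $\mathbb{R}$, which are semi-privately learnable with roughly $1/\alpha$ public examples yet not privately learnable \cite{bun2015differentially,almm19} -- and it is exactly consistency with this that forces the quantitative ``$\ll 1/\alpha$'' threshold into the lemma; the same reduction, run with approximate rather than pure privacy, likewise yields Theorem~\ref{thm:lower-bound} by invoking the impossibility of (approximately) privately learning classes of infinite Littlestone dimension.
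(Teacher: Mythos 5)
Your top-level argument for Theorem~\ref{thm:dichotomy} is essentially the paper's, just organized in the contrapositive: mutual exclusivity, then the chain ``public-data-reduction lemma $\Rightarrow$ constant-excess-error realizable pure private learner $\Rightarrow$ Lemma~\ref{lem:priv-boost} $\Rightarrow$ Lemma~\ref{lem:equiv_priv_agnos_realiz}.'' Granting the reduction lemma (the paper's Lemma~\ref{lem:reduction}), that part is sound, and your observation that only the pure-DP route admits boosting is exactly the point the paper makes.

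The genuine gap is in your account of the reduction lemma itself. Your proposed mechanism --- manufacture the simulated public sample from a disjoint chunk of the private data via a pure-DP subroutine and invoke post-processing --- would not work and is not what the paper does: producing $\npub$ examples that ``look like'' fresh draws from the unknown marginal is private synthetic-data generation, which is exactly the kind of task that is impossible for the classes at stake (e.g.\ thresholds over $\mathbb{R}$); if such a subroutine existed, it would largely bypass the lower bound you are trying to feed into the dichotomy. In Lemma~\ref{lem:reduction} the public sample handed to $\cA$ is drawn entirely from a fixed, data-independent dummy distribution $\cD_0$ (with uniformly random labels), and the real data enters only through the private sample, diluted at rate $p=1/(100\,\npub)$ among dummy examples. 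The $100\,\npub\,\alpha$ blowup is then the $1/p$ factor coming from the mixture weight --- the learner's $\alpha$-guarantee with respect to $p\,\cD+(1-p)\,\cD_0$ translates to excess error $\alpha/p$ with respect to $\cD$ --- together with a total-variation argument showing the all-dummy public sample is $0.01$-close to a genuine sample from the mixture; it is not a ``union bound over the $m$ public examples,'' and I do not see how to make that version precise. Relatedly, you first restrict $\cA$ to realizable distributions and apply the reduction to that restricted learner; but the reduction feeds $\cA$ data from the highly non-realizable mixture, so it needs $\cA$'s \emph{agnostic} guarantee (which Definition~\ref{defn:pp-learner} supplies) as its hypothesis and yields a \emph{realizable-case} private learner as its conclusion. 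Apply the lemma to the agnostic learner directly; with that fix and the correct construction of the lemma, your argument coincides with the paper's.
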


\paragraph{Proof overview.}
The crux of the argument is a {\it public-data-reduction lemma} (Lemma~\ref{lem:reduction}), 
which shows how one can reduce the number of public examples 
at the price of a proportional increase in the excess error.
This lemma implies, for example, that if $\cH$ can be learned up to an excess error of $\alpha$ 
with less than $\frac{1}{1000\alpha}$ public examples
then it can also be privately learned without any public examples and excess error of at most $<\frac{1}{10}$.
Stating contra-positively, if $\cH$ can not be privately learned with excess error $<\frac{1}{10}$
then it can not be semi-privately learned up to an excess error of $\alpha$ 
with less than $\frac{1}{1000\alpha}$ public examples.
This yields a lower bound of $\Omega(1/\alpha)$ on the public sample complexity
for every class $\cH$ which is not privately learnable with constant excess error

One example for such a class is any class with infinite Littlestone dimension (e.g., the class of $1$-dimensional thresholds over an infinite domain). This follows from the result in \cite{almm19}:

\begin{thm}[Restatement of Corollary~2 in \cite{almm19}]\label{thm:imposs_thres}
Let $\cH$ be any class of infinite Littlestone dimension (e.g., the class of thresholds over an infinite domain $\cX\subseteq\re$). For any $n\in \mathbb{N},$ given a private sample of size $n$, there is no $\left(\frac{1}{16},~\frac{1}{16},~0.1,~ \frac{1}{100\,n^2\log(n)}\right)$-private learner for $\cH$ (even in the realizable case).
\end{thm}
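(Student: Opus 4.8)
The plan is to obtain the statement from prior work rather than reprove it from scratch: as the header indicates, it is a restatement of Corollary~2 in \cite{almm19}, so the substantive content is already established there, and what remains is to check that the hypotheses are met and that the quantitative parameters we quote are a legitimate instantiation. Concretely, \cite{almm19} shows that if a class $\cH$ has Littlestone dimension at least $d$, then no $(\epsilon,\delta)$-differentially private learner with $\epsilon\le 0.1$ and $\delta$ at most roughly $1/(n^2\log n)$ can learn $\cH$ to constant accuracy and confidence from a private sample of size $n$ once $d$ is large enough as a function of $n$. Since a class of \emph{infinite} Littlestone dimension realizes, for every $d$, a Littlestone tree of depth $d$ involving finitely many domain points and finitely many hypotheses, it contains (realizable) sub-configurations of arbitrarily large Littlestone dimension; feeding a purported size-$n$ learner distributions supported on such a configuration — which is why the ``even in the realizable case'' clause matters — reduces the general case to the finite one and yields the per-$n$ impossibility.

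For completeness I would recall the shape of the argument in \cite{almm19}. The first ingredient is a generic reduction: a private learner for a class of Littlestone dimension $d$ can be used as a black box to privately solve an ``interior point''-type problem on a linearly ordered domain whose size $T$ grows with $d$, the order and domain being read off from a deep Littlestone tree / a long threshold pattern realized by $\cH$ (for thresholds over $\re$ this step is transparent, since every finite increasing sequence of reals already exhibits such a pattern). The second ingredient is the lower bound on the interior point problem, originating in \cite{bun2015differentially} and sharpened in \cite{almm19}: any $(\epsilon,\delta)$-DP algorithm that, given $n$ points from a linearly ordered domain of size $T$, outputs with constant probability a value lying between the smallest and the largest of them must have $n = \Omega(\log^* T)$, provided $\delta$ is small enough — and the regime $\delta \le \tfrac{1}{100\,n^2\log n}$ is within scope. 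Combining the two: if a size-$n$ private learner for $\cH$ with the stated parameters existed, choosing a finite sub-configuration corresponding to $T$ larger than a tower-type function of $n$ would produce a DP interior-point algorithm violating that bound, a contradiction.

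The main obstacle is precisely the second ingredient — the interior point lower bound — whose proof is a delicate fingerprinting/packing argument and is where the exact numerical constants $\tfrac1{16}$, $0.1$, and $\tfrac1{100\,n^2\log n}$ originate; reproducing it is outside the scope of this paper. Accordingly, the clean route is to invoke Corollary~2 of \cite{almm19} directly, after observing that classes of infinite Littlestone dimension (in particular thresholds over an infinite $\cX\subseteq\re$) satisfy its hypotheses. The only genuinely new — and entirely routine — verification on our side is that the accuracy, confidence, and privacy parameters we cite are a valid special case of that corollary's conclusion.
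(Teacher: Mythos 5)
Your proposal matches the paper exactly: the paper offers no proof of this statement, treating it purely as an imported result (a restatement of Corollary~2 of \cite{almm19} with specific parameter values), which is precisely your ``clean route'' of invoking that corollary after checking the hypotheses and constants. Your sketch of the underlying argument in \cite{almm19} (reduction from large Littlestone dimension to thresholds on a large ordered domain, plus the interior-point lower bound of \cite{bun2015differentially}) is accurate but not something the paper itself reproduces.
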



A special case of the above result was first proven in \cite{bun2015differentially}, where it was shown that no \emph{proper} private learner can learn thresholds over an infinite domain. A proper learner is bound to output a hypothesis from the given class. Our definitions in this paper for private and semi-private learners do not make this restriction on the learner; that is, the learners in those definitions can be \emph{non-proper}, i.e., they are allowed to output a binary hypothesis that is not necessarily in the given class $\cH$.

\begin{remark}
The aforementioned reduction we use for the lower bound holds even when the public sample is \emph{labeled}. This makes the lower bound stronger since it holds even in the fully supervised setting of semi-private learning described in Definition~\ref{defn:pp-learner}. We also note that this reduction holds for both \emph{pure} and \emph{approximate} private/semi-private learners.
\end{remark}



We now formally state and prove the reduction outlined above. 

\begin{lem}[Public data reduction lemma]\label{lem:reduction}
Let $0<\alpha\leq 1/100,~ \eps >0, ~ \delta\geq 0$. Suppose there is an $(\alpha, \frac{1}{18}, \eps, \delta)$-agnostic semi-private learner for a hypothesis class $\cH$ with private sample size $\nprv$ and public sample size $\npub$. Then, there is a $\left(100\,\npub\,\alpha, ~\frac{1}{16}, \eps, \delta\right)$-private learner that learns any distribution realizable by $\cH$ with input sample size $\lceil\frac{\nprv}{10\,\npub}\rceil$.
\end{lem}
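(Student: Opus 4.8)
The plan is to build the promised private learner $\cB$ by running the given semi-private learner $\cA$ (with private/public sample sizes $\nprv,\npub$) exactly once, on a cooked-up instance designed so that (a) changing one example of $\cB$'s input changes at most one of $\cA$'s private examples, and (b) $\cA$'s accuracy on that instance forces small error on the true target distribution. Set $m:=\lceil\nprv/(10\npub)\rceil$. I would fix, independently of the data, a point $x_0\in\cX$ and a parameter $q=\Theta(1/\npub)$ (constant tuned below), and — given a sample $S=(z_1,\dots,z_m)$ from a distribution $\cD$ realizable by $\cH$ — work with the mixture
\[
\cD' \;:=\; q\cdot\cD \;+\; \tfrac{1-q}{2}\,\delta_{(x_0,0)} \;+\; \tfrac{1-q}{2}\,\delta_{(x_0,1)} .
\]
The point of charging $x_0$ with both labels equally is that then \emph{every} hypothesis $h$ satisfies $\err(h;\cD')=q\cdot\err(h;\cD)+\tfrac{1-q}{2}$; hence $\inf_{h\in\cH}\err(h;\cD')=\tfrac{1-q}{2}$, attained by any $h^*$ realizing $\cD$.

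\textbf{The reduction and its privacy.} $\cB$ would feed $\cA$ a \emph{private} input of $\nprv$ examples that is i.i.d.\ from $\cD'$ yet uses each $z_i$ at most once: draw $M\sim\Bin(\nprv,q)$; if $M>m$ (probability $e^{-\Omega(m)}$, determined by the internal coins and not by $S$) abort and output a fixed hypothesis; else put $z_1,\dots,z_M$ into a uniformly random set of $M$ coordinates and fill every remaining coordinate with an independent copy of $(x_0,0)$ or $(x_0,1)$, each picked by a fair coin — conditioned on $M\le m$ this is exactly $\nprv$ i.i.d.\ draws from $\cD'$. For the \emph{public} input $\cB$ uses a \emph{data-independent} sample: $\npub$ i.i.d.\ copies of $x_0$ (or of a fair coin over $(x_0,0),(x_0,1)$ if the public data is labelled), and then returns $\cA$'s output. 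Privacy follows by a coupling: for neighbouring $S,S'$ differing in coordinate $j$, run both executions with the same auxiliary randomness; the public input to $\cA$ is fixed, both runs agree on whether to abort, and $z_j$ can influence only the single coordinate it lands in (and no coordinate if $j>M$), so $\cA$'s private input changes in at most one example. Since $\cA(\cdot,T)$ is $(\eps,\delta)$-DP for every public $T$, averaging over the shared randomness yields that $\cB$ is $(\eps,\delta)$-DP with respect to $S$.

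\textbf{Accuracy.} Fed a genuine i.i.d.\ public sample of $\cD'$, $\cA$ outputs with probability $\ge 1-\tfrac1{18}$ a hypothesis $\hat h$ with $\err(\hat h;\cD')\le\inf_{h\in\cH}\err(h;\cD')+\alpha=\tfrac{1-q}{2}+\alpha$, which by the identity above says $q\cdot\err(\hat h;\cD)\le\alpha$, i.e.\ $\err(\hat h;\cD)\le\alpha/q$. The public input $\cB$ really supplies differs from a true i.i.d.\ public sample of $\cD'$ (resp.\ of $(\cD')_\cX$) by total-variation distance at most $\npub q$ — the two agree on the mass $1-q$ sitting at $x_0$ — so a coupling argument costs at most an extra $\npub q$ in the success probability. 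Choosing the constant in $q=\Theta(1/\npub)$ so that $\npub q$ together with the $e^{-\Omega(m)}$ abort probability stays below $\tfrac1{16}-\tfrac1{18}$ makes the total failure probability at most $\tfrac1{16}$ and keeps $\alpha/q\le 100\,\npub\,\alpha$, while $\cB$ uses only $m=\lceil\nprv/(10\npub)\rceil$ examples — exactly the conclusion.

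\textbf{Where the difficulty sits.} The hard part is the privacy step. Because $\cB$ gets far fewer than the $\nprv$ examples $\cA$ demands, it must pad or reuse its sample, and the naive options — repeating $S$, or resampling from its empirical distribution — cause a single change in $S$ to disturb $\Theta(\nprv/m)=\Theta(\npub)$ of $\cA$'s private examples, which would inflate $\eps$ by a factor $\Theta(\npub)$ and make $\delta$ non-negligible. The construction above sidesteps this by using each $z_i$ at most once in $\cA$'s private input while still presenting $\cA$ with a genuinely i.i.d.\ sample (so its black-box guarantee still applies), and by keeping $\cA$'s public input entirely data-independent so it cannot leak; the both-labels dummy point is precisely what lets a data-independent public input coexist with an instance that $\cA$ (being agnostic) can still handle, and on which its error translates back — with a blow-up of only $1/q=\Theta(\npub)$ — into error on $\cD$.
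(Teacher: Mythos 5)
Your proposal is correct and is essentially the paper's own argument: you mix $\cD$ with a dummy distribution having uniformly random labels at rate $q=\Theta(1/\npub)$, embed each real example at most once into an i.i.d.\ private sample of size $\nprv$ from the mixture, give $\cA$ a data-independent public sample from the dummy component (which is within total variation $O(\npub q)=O(1)$-small of a true mixture sample), and translate the agnostic guarantee back via $\err(h;\cD')=q\,\err(h;\cD)+\tfrac{1-q}{2}$. The only deviations — concentrating the dummy distribution on a single point $x_0$ and drawing the Binomial count with random positions instead of per-slot Bernoulli coins — are cosmetic, and your constant bookkeeping is no looser than the paper's.
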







\begin{proof}

Let $\cA$ denote the assumed agnostic-case semi-private learner for $\cH$ with input private sample of size $\nprv$ and input public sample of size $\npub$. Using $\cA$, we construct
a realizable-case private learner for $\cH$, which we denote by $\cB$. The description of $\cB$ appears in Algorithm~\ref{alg:priv}. 

\begin{algorithm}
\caption{Description of the private learner $\cB$:} 
\label{alg:priv}
\begin{algorithmic}[1]
\REQUIRE Private sample $\tS=(\tz_1, \ldots, \tz_{\tn})$ of size $\tn=\lceil\nprv/(10\cdot\npub)\rceil$.
\STATE Pick a fixed (dummy) distribution $\cD_0$ over $\cZ=\cX\times\{0,1\}$ where the label $y\in\{0,1\}$ is drawn uniformly at random from $\{0, 1\}$ independently from $x\in\cX$.
\STATE Set $p={1}/(100\cdot \npub)$.
\STATE Using $\tS$ and $\cD_0$, construct samples $\Sprv,~\Spub$ using procedures $\prvs(\tS, \cD_0, p, \nprv)$ and $\pubs(\tS, \cD_0, \npub)$ given by Algorithms~\ref{alg:privsamp} and \ref{alg:pubsamp} below. 
\STATE Return $\tlh=\cA(\Sprv, \Spub)$.
\end{algorithmic}
\end{algorithm}


\begin{algorithm}
\caption{Private Sample Generator $\prvs$:} 
\label{alg:privsamp}
\begin{algorithmic}[1]
\REQUIRE Sample $\tS=(\tz_1, \ldots, \tz_{\tn})$, Distribution $\cD_0,$ parameter $p$, sample size $\nprv$.
\STATE $i:=1$
\WHILE{$\tS\neq \emptyset$ \textbf{and} $i\leq \nprv$:}
\STATE Sample $b_i\sim\Ber(p)$ (independently for each $i$), where $\Ber(p)$ is Bernoulli distribution with mean $p$.
\IF{$b_i=1$:}
\STATE Set $\zprv_i$ to be the next element in $\tS$, i.e., $\zprv_i=\tz_{j_i},$ where $j_i=\sum_{k=1}^i b_k$.
\STATE Remove this element from $\tS$: $\tS\leftarrow \tS\setminus \tz_{j_i}$.
\ELSE 
\STATE Set $\zprv_i=\zz_i,$ where $\zz_i$ is a fresh independent example from the ``dummy'' distribution $\cD_0$.
\ENDIF
\STATE $i\leftarrow i+1$
\ENDWHILE
\RETURN $\Sprv=(\zprv_1, \ldots, \zprv_{\nprv}).$
\end{algorithmic}
\end{algorithm}

\begin{algorithm}
\caption{Public Sample Generator $\pubs$:} 
\label{alg:pubsamp}
\begin{algorithmic}[1]
\REQUIRE Sample $\tS=(\tz_1, \ldots, \tz_{\tn})$, Distribution $\cD_0,$ sample size $\npub$.
\FOR{$i=1, \ldots, \npub:$}
\STATE Set $\zpub_i = \zz_i$ where $\zz_i$ is a fresh independent example from $\cD_0$.
\ENDFOR
\RETURN $\Spub=(\zpub_1, \ldots, \zpub_{\npub})$ 
 \end{algorithmic}
\end{algorithm}


The following two claims about $\cB$ establish its privacy and accuracy guarantees.

\begin{claim}[Privacy guarantee of $\cB$]\label{c:privacy}
$\cB$ is $(\eps, \delta)$-differentially private
\end{claim}
This follows directly from the fact that for any realization of $\Spub,$ $\cA(\cdot, \Spub)$ is $(\eps, \delta)$-differentially private, the fact that $\Spub$ does not contain any points from $\tS$, and the fact that each point in $\tS$ appears at most once in $\Sprv$. 

Thus, it remains to show that
\begin{claim}[Accuracy guarantee of $\cB$]\label{c:utility}
Let $\cD$ be any distribution over $\Z$ that is realizable by $\cH$. Suppose $\tS\sim\cD^{\tn}$. Then, except with probability at most $1/16$ (over the choice of $\tS$ and internal randomness in $\cB$), the output hypothesis $\tlh$ satisfies: $\err(\tlh; ~\cD)\leq 100\,\npub\,\alpha.$
\end{claim}
Let $\cDp$ denote the mixture distribution $p\cdot\cD+(1-p)\cdot\cD_0$ (recall the definition of $p$ from Algorithm~\ref{alg:priv}). 
To prove Claim~\ref{c:utility}, we first show that both $\Sprv$ and $\Spub$ can be viewed as being sampled from $\cDp$ with almost no impact on the analysis. Then, using the fact that $\cA$ learns $\cH$ with respect to $\cDp$, the claim will follow.

First, note that since $\tn=10\cdot p\cdot\nprv,$ then by Chernoff's bound, except with probability $<0.01,$ Algorithm~\ref{alg:privsamp} exits the \textbf{WHILE} loop with $i=\nprv$. Thus, except with probability~$<0.01,$ we have  
\begin{align}
\lvert\Sprv\rvert&=\nprv, \text{  hence, } \Sprv\sim\cDp^{\nprv}. \label{eq:bad_event1}  
\end{align}
As for $\Spub$, note that $\Spub=(\zz_1, \ldots, \zz_{\npub})\sim\cD_0^{\npub}$, 
and therefore we can not use the same argument we used with $\Sprv$. 
Instead, we will show that $\cD_0^{\npub}$ is close in total variation to $\cDp^{\npub}$. Let $\hSpub=(\hz_1, \ldots, \hz_{\npub})$ be i.i.d.\ sequence generated as follows: for each $i\in [\npub],$ $\hz_i=b_i\,v_i+(1-b_i)\,\zz_i,$ where $(b_1, \ldots, b_{\npub})\sim \left(\Ber(p)\right)^{\npub}$, and $(v_1, \ldots, v_n)\sim\cD^{\npub}$. It is clear that $\hSpub\sim\cDp^{\npub}$. Moreover, observe that 
\begin{align*}
\pr{}{\hSpub=\Spub}&\geq\pr{}{b_i=0~~\forall ~i\in [\npub]}=\left(1-\frac{1}{100\,\npub}\right)^{\npub}\geq 0.99    
\end{align*}
Note that $\pr{}{\hSpub\neq \Spub}$ is the probability measure attributed to the first component of the mixture distribution $\cDp$ of $\hat{S}_{pub}$  (i.e., the component from $\cD$). Hence, it follows that the total variation between the distribution of $\hat{S}_{pub}$ (induced by the mixture $\cDp$) and the distribution of $S_{pub}$ (induced by $\cD_0$) is at most $0.01$. In particular, the probability of any event w.r.t. the distribution of $\hSpub$ is at most $0.01$ far from the probability of the same event w.r.t. the distribution of $\Spub$. Hence,
\begin{align}
    &\pr{\Sprv, \Spub, \cA}{\err\left(\cA(\Sprv, \Spub);~\cDp\right)-\min\limits_{h\in\cH}\err(h; \cDp) >\alpha} \nonumber \\
    &-\pr{\Sprv, \hSpub, \cA}{\err\left(\cA(\Sprv, \Spub);~\cDp\right)-\min\limits_{h\in\cH}\err(h; \cDp) >\alpha}\leq 0.01\label{ineq:bad_event2}
\end{align}
Now, from (\ref{eq:bad_event1}) and the premise that $\cA$ is agnostic semi-private learner, we have 
\begin{align*}
    \pr{\Sprv, \hSpub, \cA}{\err\left(\cA(\Sprv, \Spub);~\cDp\right)-\min\limits_{h\in\cH}\err(h; \cDp) >\alpha}&\leq \frac{1}{17}
\end{align*}
Hence, using (\ref{ineq:bad_event2}), we conclude that except with probability $<1/16$,
\begin{align}
    \err\left(\cA(\Sprv, \Spub);~\cDp\right)-\min\limits_{h\in\cH}\err(h; \cDp) &\leq \alpha.\label{ineq:err_semi-priv}
\end{align}
Note that for any hypothesis $h$, 
$$\err(h;~ \cDp)=p\cdot\err(h; ~\cD)+(1-p)\cdot\err(h; \cD_0)=p\cdot\err(h; ~\cD)+\frac{1}{2}(1-p),$$ 
where the last equality follows from the fact that the labels generated by $\cD_0$ are completely noisy (uniformly random labels). Hence, we have $\arg\min\limits_{h\in\cH}\err(h; \cDp)=\arg\min\limits_{h\in\cH}\err(h; \cD)$. That is, the optimal hypothesis with respect to the realizable distribution $\cD$ is also optimal with respect to the mixture distribution~$\cDp$. Let $h^*\in\cH$ denote such hypothesis. Note that $\err(h^*; \cD)=0$ and $\err(h^*; \cDp)=\frac{1}{2}(1-p)$. These observations together with (\ref{ineq:err_semi-priv}) imply that except with probability $<1/16$, we have 
\begin{align*}
    \alpha&\geq p\cdot \err\left(\cA(\Sprv, \Spub);~\cD\right)
\end{align*}
Hence, $\err\left(\cB(\tS); \cD\right)=\err\left(\cA(\Sprv, \Spub);~\cD\right)\leq 100\cdot\npub\cdot\alpha.$
This completes the proof.



\end{proof}

With Lemma~\ref{lem:reduction}, we are now ready to prove the main results for this section:

\subsubsection*{Proof of Theorem~\ref{thm:lower-bound}}
\begin{proof}
Suppose $\cA$ is a semi-private learner for~$\cH$ with sample complexities $\nprv,\npub$. 
In particular, given $\nprv(\alpha,\frac{1}{18}),\npub(\alpha,\frac{1}{18})$ private and public examples, 
$\cA$ is $(\alpha, ~\frac{1}{18}, ~0.1,~ \frac{1}{100\, \nprv^2\log(\nprv)})$-semi-private learner for~$\cH$. Hence, by Lemma~\ref{lem:reduction}, there is $(100\npub\alpha, ~\frac{1}{16}, ~0.1,~ \frac{1}{100\, \nprv^2\log(\nprv)})$-private learner for~$\cH$. Thus, Theorem~\ref{thm:imposs_thres} implies that $100\npub\alpha > \frac{1}{16}$ and hence that $\npub > \frac{1}{1600\,\alpha}$ as required.
\end{proof}

\subsubsection*{Proof of Theorem~\ref{thm:dichotomy}}
\begin{proof}
First, if $\cH$ is learnable by a pure private learner, then trivially the second condition cannot hold since $\cH$ can be learned without any public examples. Now, suppose that the first item does \emph{not} hold. Note that by Lemma~\ref{lem:equiv_priv_agnos_realiz}, this implies that there is \emph{no} pure private learner for $\cH$ with respect to realizable distributions. By Lemma~\ref{lem:priv-boost}, this in turn implies that there is \emph{no} $\left(\frac{1}{16}, ~\frac{1}{16}, ~0.1\right)$-pure private learner for $\cH$ with respect to realizable distributions. Now, suppose $\cA$ is a pure semi-private learner $\cA$ for $\cH$. Then, this implies that for any $\alpha>0$, $\cA$ is an $\left(\alpha, ~\frac{1}{18}, ~0.1\right)$-pure semi-private learner for $\cH$ {with sample complexities $\nprv(\alpha,\frac{1}{18}),\npub(\alpha,\frac{1}{18})$}. Hence, by Lemma~\ref{lem:reduction}, there is a $\left(100\,\npub\,\alpha, ~\frac{1}{16}, ~0.1\right)$-pure private learner for $\cH$ w.r.t. realizable distributions. This together with the earlier conclusion implies that $100\,\npub\,\alpha> \frac{1}{16},$ and therefore that $\npub> \frac{1}{1600\,\alpha},$ which shows that the condition in the second item holds. 
\end{proof}

\newpage
\bibliographystyle{alpha}
\bibliography{references}

\else

\subsection*{Acknowledgements}
N. Alon's research is supported in part by NSF grant DMS-1855464,
BSF grant 2018267, and the Simons Foundation. R. Bassily's research is supported by NSF award AF-1908281, a Google Faculty Research Award, and OSU faculty start-up support. Part of this work was done while R. Bassily was visiting the Simons Institute for the Theory of Computing. 

\newpage
\bibliographystyle{alpha}
\bibliography{references}

\newcommand{\etalchar}[1]{$^{#1}$}
\begin{thebibliography}{DKM{\etalchar{+}}06}

\bibitem[ALMM18]{almm19}
Noga Alon, Roi Livni, Maryanthe Malliaris, and Shay Moran.
\newblock Private pac learning implies finite littlestone dimension.
\newblock {\em arXiv preprint arXiv:1806.00949 (STOC 2019, in Press)}, 2018.

\bibitem[BNS13]{beimel2013private}
Amos Beimel, Kobbi Nissim, and Uri Stemmer.
\newblock Private learning and sanitization: Pure vs. approximate differential
  privacy.
\newblock In {\em Approximation, Randomization, and Combinatorial Optimization.
  Algorithms and Techniques}, pages 363--378. Springer, 2013.

\bibitem[BNS15]{beimel2015learning}
Amos Beimel, Kobbi Nissim, and Uri Stemmer.
\newblock Learning privately with labeled and unlabeled examples.
\newblock In {\em Proceedings of the twenty-sixth annual ACM-SIAM symposium on
  Discrete algorithms}, pages 461--477. Society for Industrial and Applied
  Mathematics, 2015.

\bibitem[BNSV15]{bun2015differentially}
Mark Bun, Kobbi Nissim, Uri Stemmer, and Salil Vadhan.
\newblock Differentially private release and learning of threshold functions.
\newblock In {\em Foundations of Computer Science (FOCS), 2015 IEEE 56th Annual
  Symposium on}, pages 634--649. IEEE, 2015.

\bibitem[BPS09]{Ben-DavidPS09}
Shai Ben{-}David, D{\'{a}}vid P{\'{a}}l, and Shai Shalev{-}Shwartz.
\newblock Agnostic online learning.
\newblock In {\em {COLT} 2009 - The 22nd Conference on Learning Theory,
  Montreal, Quebec, Canada, June 18-21, 2009}, 2009.

\bibitem[BTT18]{bassily2018NIPS}
Raef Bassily, Abhradeep~Guha Thakurta, and Om~Dipakbhai Thakkar.
\newblock Model-agnostic private learning.
\newblock In {\em Advances in Neural Information Processing Systems}, pages
  7102--7112, 2018.

\bibitem[CH11]{chaudhuri2011sample}
Kamalika Chaudhuri and Daniel Hsu.
\newblock Sample complexity bounds for differentially private learning.
\newblock In {\em Proceedings of the 24th Annual Conference on Learning
  Theory}, pages 155--186, 2011.

\bibitem[DKM{\etalchar{+}}06]{DKMMN06}
Cynthia Dwork, Krishnaram Kenthapadi, Frank McSherry, Ilya Mironov, and Moni
  Naor.
\newblock Our data, ourselves: Privacy via distributed noise generation.
\newblock In {\em EUROCRYPT}, 2006.

\bibitem[DMNS06]{DMNS06}
Cynthia Dwork, Frank McSherry, Kobbi Nissim, and Adam Smith.
\newblock Calibrating noise to sensitivity in private data analysis.
\newblock In {\em Theory of Cryptography Conference}, pages 265--284. Springer,
  2006.

\bibitem[DRV10]{DRV10}
Cynthia Dwork, Guy~N. Rothblum, and Salil~P. Vadhan.
\newblock Boosting and differential privacy.
\newblock In {\em FOCS (Full version:
  https://guyrothblum.files.wordpress.com/2014/11/drv10.pdf)}, 2010.

\bibitem[HCB16]{hamm2016learning}
Jihun Hamm, Yingjun Cao, and Mikhail Belkin.
\newblock Learning privately from multiparty data.
\newblock In {\em International Conference on Machine Learning}, pages
  555--563, 2016.

\bibitem[KLN{\etalchar{+}}08]{KLNRS08}
Shiva~Prasad Kasiviswanathan, Homin~K. Lee, Kobbi Nissim, Sofya Raskhodnikova,
  and Adam Smith.
\newblock What can we learn privately?
\newblock In {\em FOCS}, pages 531--540. IEEE Computer Society, 2008.

\bibitem[Lit87]{Littlestone87}
Nick Littlestone.
\newblock Learning quickly when irrelevant attributes abound: {A} new
  linear-threshold algorithm.
\newblock {\em Machine Learning}, 2(4):285--318, 1987.

\bibitem[MT07]{mcsherry2007mechanism}
Frank McSherry and Kunal Talwar.
\newblock Mechanism design via differential privacy.
\newblock In {\em FOCS}, volume~7, pages 94--103, 2007.

\bibitem[PAE{\etalchar{+}}17]{papernot2017semi}
Nicolas Papernot, Mart{\i}n Abadi, \'Ulfar Erlingsson, Ian Goodfellow, and
  Kunal Talwar.
\newblock Semi-supervised knowledge transfer for deep learning from private
  training data.
\newblock {\em stat}, 1050, 2017.

\bibitem[PSM{\etalchar{+}}18]{papernot2018scalable}
Nicolas Papernot, Shuang Song, Ilya Mironov, Ananth Raghunathan, Kunal Talwar,
  and {\'U}lfar Erlingsson.
\newblock Scalable private learning with pate.
\newblock {\em arXiv preprint arXiv:1802.08908}, 2018.

\bibitem[Sau72]{sauer1972density}
Norbert Sauer.
\newblock On the density of families of sets.
\newblock {\em Journal of Combinatorial Theory, Series A}, 13(1):145--147,
  1972.

\bibitem[SSBD14]{shalev2014understanding}
Shai Shalev-Shwartz and Shai Ben-David.
\newblock {\em Understanding machine learning: From theory to algorithms}.
\newblock Cambridge university press, 2014.

\bibitem[VC15]{vapnik2015uniform}
Vladimir~N Vapnik and A~Ya Chervonenkis.
\newblock On the uniform convergence of relative frequencies of events to their
  probabilities.
\newblock In {\em Measures of complexity}, pages 11--30. Springer, 2015.

\end{thebibliography}
\fi

\end{document}